\documentclass[]{TEAI}
\usepackage{amsmath} 
\usepackage{natbib}
\usepackage{graphicx}
\usepackage{subcaption} 

\usepackage[toc,page,header]{appendix}
\usepackage[utf8]{inputenc} 
\usepackage[T1]{fontenc}    
\usepackage{hyperref}       
\usepackage{url}            
\usepackage{booktabs}       
\usepackage{amsfonts}       
\usepackage{nicefrac}       
\usepackage{microtype}      
\usepackage{wrapfig}

\usepackage{amssymb}  
\usepackage{fontawesome}  
\usepackage{url}  

\usepackage{titletoc}

\usepackage{tikz}  
\usepackage{comment} 
\usepackage{tabularx}  
\usepackage{booktabs}  

\usepackage{minitoc}

\usepackage{booktabs}
\usepackage{array}
\usepackage{etoolbox}

\definecolor{lightblue}{RGB}{200, 230, 255}  
\definecolor{headerblue}{RGB}{150, 200, 255} 

\usepackage{pgfplots}
\usepackage[utf8]{inputenc} 
\usepackage[T1]{fontenc}    
\usepackage{hyperref}       
\usepackage{url}            
\usepackage{booktabs}       
\usepackage{amsfonts}       
\usepackage{nicefrac}       
\usepackage{microtype}      
\usepackage{xcolor}         
\usepackage{graphicx}
\usepackage{float}
\usepackage{comment}
\usepackage{multirow} 
\usepackage{amsmath} 
\usepackage{makecell} 
\usepackage{siunitx}  
\usepackage{tikz}
\usepackage{pgf-pie} 
\usepackage{subcaption}
\usepackage{wrapfig}
\usepackage[export]{adjustbox}

\usepackage{ragged2e}      
\usepackage{tabularx}       
\usepackage{array}          
\usepackage{caption}        
\usepackage{enumitem}
\usepackage{pifont}
\usepackage[hang,flushmargin]{footmisc} 

\usepackage{tcolorbox}
\usepackage{tcolorbox}    
\tcbuselibrary{breakable}  
\tcbuselibrary{skins}      
\usepackage{tabularx}
\usepackage{listings}

\usepackage{amsthm}
\newtheorem{theorem}{Theorem}
\theoremstyle{definition}
\newtheorem{definition}{Definition}

\theoremstyle{plain}
\newtheorem{lemma}{Lemma}
\newtheorem{proposition}{Proposition}

\usepackage{algorithm}
\usepackage{algorithmic}

\title{\textsc{ArcFlow}: Unleashing 2-Step Text-to-Image Generation via High-Precision Non-Linear Flow Distillation}

\author{
    Zihan Yang\textsuperscript{1,*},
    Shuyuan Tu\textsuperscript{1,*},
    Licheng Zhang\textsuperscript{1},  
    Qi Dai\textsuperscript{2},  
    Yu-Gang Jiang\textsuperscript{1},  
    Zuxuan Wu\textsuperscript{1}
}

\affiliation[1]{\mbox{Fudan University}} 
\affiliation[2]{\mbox{Microsoft Research Asia}}
    
\abstract{
\begin{abstract}

Diffusion models have achieved remarkable generation quality, but they suffer from significant inference cost due to their reliance on multiple sequential denoising steps, motivating recent efforts to distill this inference process into a few-step regime. However, existing distillation methods typically approximate the teacher trajectory by using linear shortcuts, which makes it difficult to match its constantly changing tangent directions as velocities evolve across timesteps, thereby leading to quality degradation. 
To address this limitation, we propose ArcFlow, a few-step distillation framework that explicitly employs non-linear flow trajectories to approximate pre-trained teacher trajectories. 
Concretely, ArcFlow parameterizes the velocity field underlying the inference trajectory as a mixture of continuous momentum processes. This enables ArcFlow to capture velocity evolution and extrapolate coherent velocities to form a continuous non-linear trajectory within each denoising step. Importantly, this parameterization admits an analytical integration of this non-linear trajectory, which circumvents numerical discretization errors and results in high-precision approximation of the teacher trajectory.
To train this parameterization into a few-step generator, we implement ArcFlow via trajectory distillation on pre-trained teacher models using lightweight adapters. This strategy ensures fast, stable convergence while preserving generative diversity and quality. 
Built on large-scale models (Qwen-Image-20B and FLUX.1-dev), ArcFlow only fine-tunes on less than $5\%$ of original parameters and achieves a $40 \times$ speedup with 2 NFEs over the original multi-step teachers without significant quality degradation. Experiments on benchmarks show the effectiveness of ArcFlow both qualitatively and quantitatively.
\end{abstract}
}

\checkdata[Website]{\url{https://github.com/pnotp/ArcFlow}}

\begin{document}
\maketitle
\renewcommand{\thefootnote}{}
\footnotetext{$^*$Equal Contribution.}
\renewcommand{\thefootnote}{\arabic{footnote}}

\vspace{-1.5em}

\begin{figure}[h]
    \centering
    \includegraphics[width=1\textwidth]{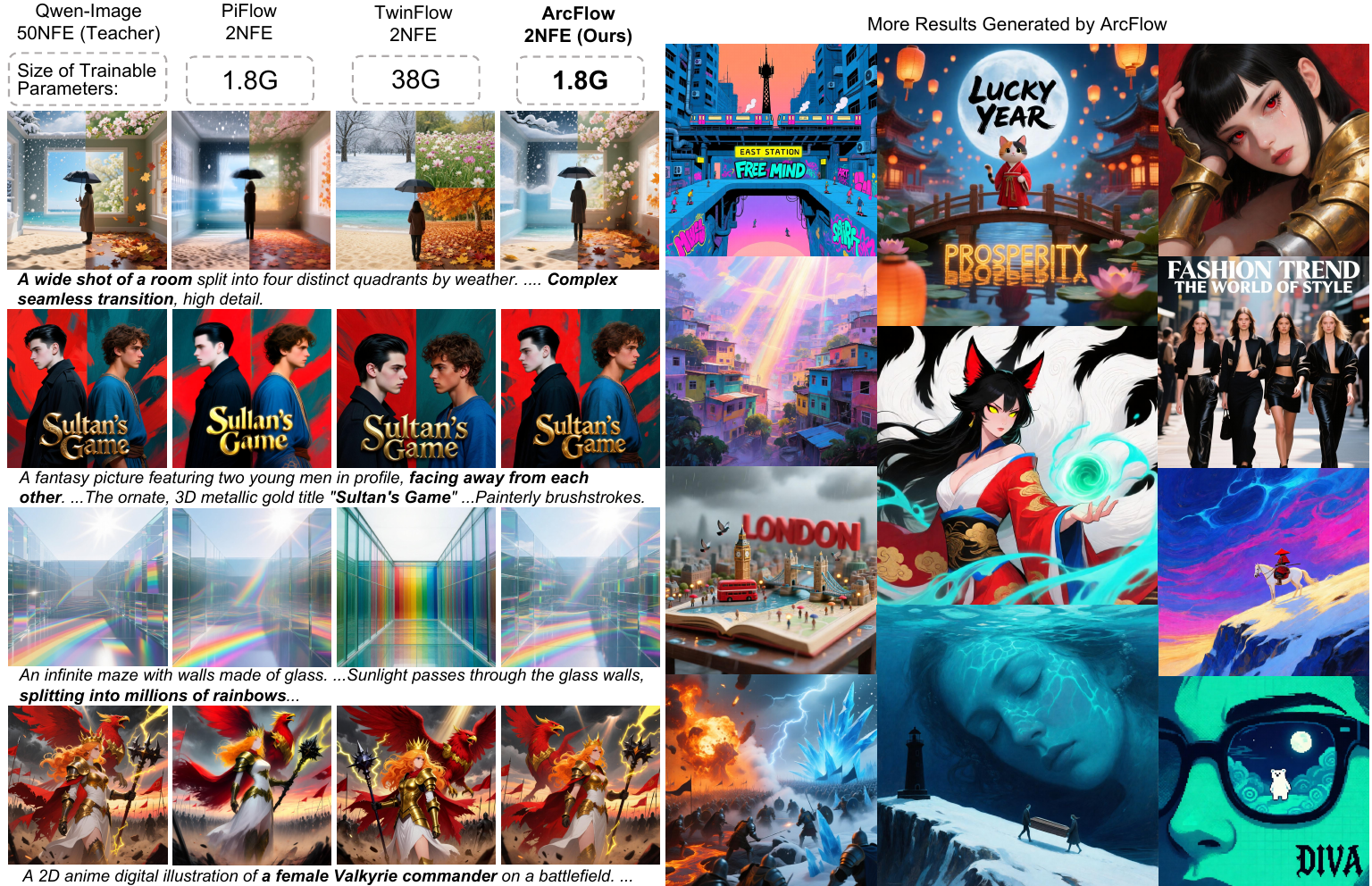}
    \caption{Comparisons between images generated by ArcFlow and other state-of-the-art distillation methods based on Qwen-Image-20B, demonstrating the power of ArcFlow for few-step high-fidelity generation while maintaining remarkable parameter efficiency. 
    }
    \label{fig:cover}
\end{figure}

\section{Introduction}

Diffusion and flow matching models have emerged as the dominant paradigms for high-fidelity visual generation ~\cite{ho2020denoisingdiffusionprobabilisticmodels,lipman2023flowmatchinggenerativemodeling,albergo2023buildingnormalizingflowsstochastic, tu2024motioneditor, tu2024motionfollower, tu2025stableanimator, tu2025stableanimator++, tu2025stableavatar, tu2025flashportrait, tu2023implicit}. Despite their impressive capabilities, they rely on iterative differential equation solvers, typically requiring 40 to 100 denoising steps to traverse the trajectory from noise to data, making them impractical for real-time applications. Therefore, accelerating sampling without compromising quality remains a critical challenge.

To address this issue, recent research has explored various paradigms to distill a pre-trained teacher model into a few-step student generator. Different methods range from progressive distillation~\cite{salimans2022progressivedistillationfastsampling,meng2023distillationguideddiffusionmodels} to consistency-based approaches~\cite{song2023consistencymodels,luo2023latentconsistencymodelssynthesizing}, and distribution matching ~\cite{sauer2023adversarialdiffusiondistillation,yin2024onestepdiffusiondistributionmatching,cheng2025twinflowrealizingonestepgeneration} that employ adversarial or divergence losses to align distributions.

However, their essence still lies in approximating the trajectory from the teacher generation process ($40\sim100$ steps), whose tangent directions vary over multiple timesteps, via a linear shortcut under very few steps ($2\sim4$ steps). This enforces the students to implicitly learn such tangent variation with linear trajectories, leading to geometric mismatch.

In light of this, we propose ArcFlow, a few-step distillation framework that introduces explicit non-linear flow trajectories via velocity parameterization to approximate the flow trajectories from a pre-trained teacher model. Since the flow trajectory is equivalent to how the velocity evolves across timesteps, we utilize the notion of momentum in physics~\cite{goldstein2002classical} to describe this evolution, where the overall trajectory is determined only by the initial velocity and a momentum factor. 
Consequently, we parameterize the velocity field as a weighted mixture of continuous momentum processes. 
By harnessing the continuity of this momentum process over adjacent timesteps, the model can extrapolate coherent velocities through the parameterization, thus efficiently constructing the non-linear trajectories based on the predicted velocity shifts. 
Notably, our parameterization admits a closed-form analytical solution to the flow ODE~\cite{song2021scorebasedgenerativemodelingstochastic}, enabling direct computation of the terminal state in a single forward pass. 
This ensures the predicted velocity evolution is applied accurately across timesteps within the interval, rather than being approximated by a linear discrete update, thereby ensuring high-precision flow distillation.

By tackling the geometric mismatch, ArcFlow ensures that the trajectory of the student naturally aligns with the teacher’s inherent tangent variation. This alignment fundamentally simplifies the distillation task, enabling parameter-efficient training. Unlike prior methods requiring full-model training, ArcFlow achieves state-of-the-art results by fine-tuning only lightweight LoRA adapters and the output head. 

As illustrated in \cref{fig:cover}, with only 2 NFEs, ArcFlow achieves high-fidelity generation comparable to the teacher Qwen-Image-20B, surpassing the 2-step generation quality of pi-Flow~\cite{piflow} and TwinFlow~\cite{cheng2025twinflowrealizingonestepgeneration} while utilizing very few trainable parameters.
Furthermore, the convergence analysis (\cref{fig:convergence}) highlights that the training of ArcFlow yields significantly faster convergence and superior stability, validating the effectiveness of the alignment with non-linear trajectory that efficiently eliminates the geometric optimization bottleneck.

\vspace{-8pt}
\begin{figure}[h]  
    \begin{minipage}{0.6\columnwidth}
         Our main contributions are as follows: 
        (1) We propose ArcFlow, the first distillation framework to explicitly construct a non-linear flow trajectory to approximate the teacher trajectory. We parameterize the velocity as a continuous momentum mixture, whose analytic solution for trajectory integration ensures high-precision alignment with the teacher.
        (2) We introduce an analytic trajectory solver for ArcFlow, which enables an efficient objective for distillation. It simplifies the training process, enabling parameter-efficient adaptation and fast convergence.
        (3) Evaluations on benchmark datasets demonstrate superior robustness of ArcFlow, achieving SOTA across diverse backbones. It achieves a $40\times$ inference speedup over the teacher and at most $4\times$ faster training convergence than prior methods, while fine-tuning only less than $5\%$ of the original parameters.
    \end{minipage}
    \hfill 
    \begin{minipage}{0.38\columnwidth}
        \centering
        \vspace{0.3cm}
        \includegraphics[width=\linewidth]{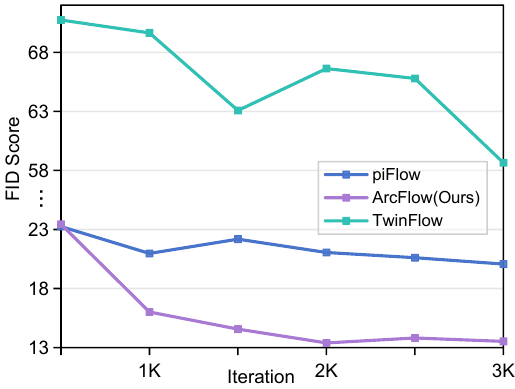}
        \vspace{-0.3cm}
        \caption{Comparison of FID scores across training iterations for different methods. ArcFlow achieves superior convergence speed.}
        \label{fig:convergence}
    \end{minipage}
\end{figure}

\section{Related Work}
\noindent\textbf{Text-to-image generation}
Diffusion~\cite{ho2020denoisingdiffusionprobabilisticmodels} and flow matching models~\cite{lipman2023flowmatchinggenerativemodeling} have emerged as the standard for high-resolution visual synthesis. Recent scaling efforts, such as Stable Diffusion 3~\cite{esser2024scalingrectifiedflowtransformers}, FLUX~\cite{flux2024,flux-2-2025}, and Qwen-Image~\cite{wu2025qwenimagetechnicalreport}, leverage Transformer to achieve exceptional performance. However, these models fundamentally rely on integrating probability flow ODEs via iterative numerical solvers. They necessitates 40 to 100 function evaluations (NFEs), creating a significant latency bottleneck that hinders real-time deployment and necessitates acceleration.

\noindent\textbf{Few-step Image Generation}
Accelerating the inference of diffusion models has become a critical topic, aiming to achieve high-fidelity synthesis with few function evaluations (NFEs). To this end, knowledge distillation has emerged as a dominant paradigm, where a student model is trained to approximate the complex sampling trajectory of a pre-trained teacher. One line of work focuses on trajectory simplification, such as Progressive Distillation ~\cite{salimans2022progressivedistillationfastsampling, meng2023distillationguideddiffusionmodels} and Rectified Flow ~\cite{liu2022flowstraightfastlearning}, attempting to reduce NFEs by iteratively straightening the flow. However, they struggle to eliminate discretization errors in the few-step regime. 
Consistency Models~\cite{song2023consistencymodels,luo2023latentconsistencymodelssynthesizing} map points directly to the data via self-consistency constraints, but they often require computationally expensive Jacobian-vector product calculations to maintain convergence stability~\cite{geng2025meanflowsonestepgenerative}.

To further push limits to 1-4 steps, VSD~\cite{wang2023prolificdreamerhighfidelitydiversetextto3d} and DMD~\cite{yin2024onestepdiffusiondistributionmatching} introduce discriminator-based losses, and TwinFlow~\cite{cheng2025twinflowrealizingonestepgeneration} uses a self-adversarial objective. While these improve visual sharpness, the reliance on adversarial objectives and unstable training leads to mode collapse and high memory overhead. 
Recent attempts~\cite{gmflow,piflow} approximate evolution of velocities via Gaussian mixtures. However, their probabilistic approximations lack precision at lower NFEs (2 steps). 
By contrast, ArcFlow utilizes an analytic momentum solver to achieve precise, stable, and parameter-efficient distillation.
\begin{figure*}[t!]
    \centering
    \includegraphics[width=1.0\linewidth]{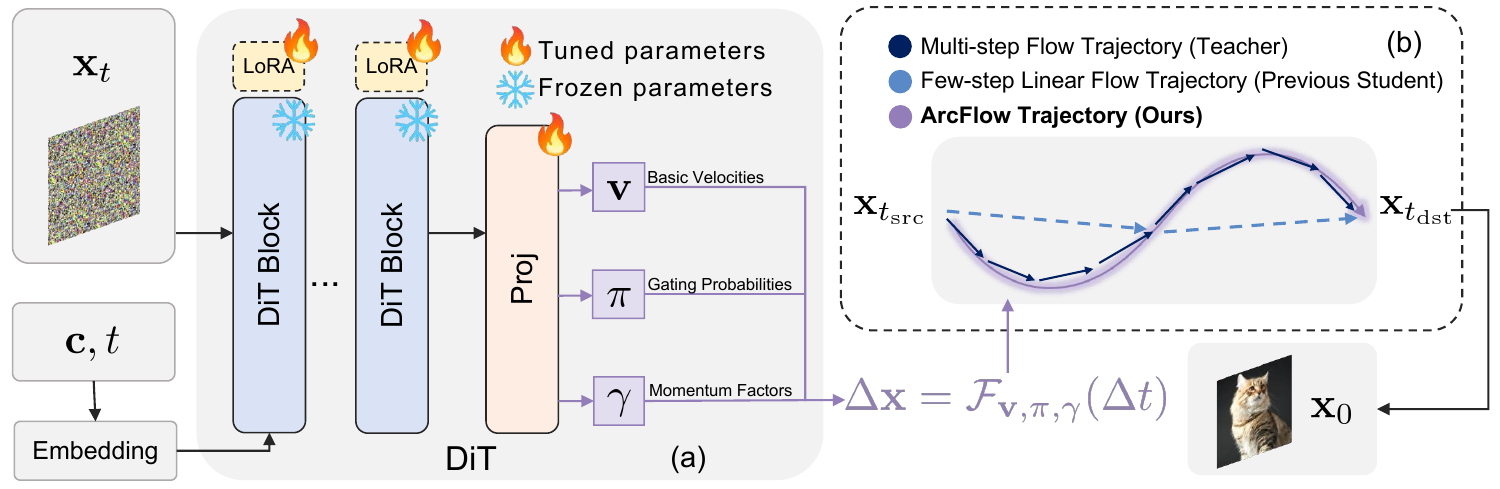}
    \caption{ArcFlow Framework. (a) The forward pipeline of ArcFlow. Given an input $x_t$, the condition $c$ and timestep $t$, a DiT backbone with three projection heads predicts the parameters $v$,  $\omega$, $\gamma$ across $K$ dynamic modes, which respectively denote the mode-specific velocities, momentum factors, and the gating probabilities used to reconstruct the teacher velocity field.  (b) A comparison of flow trajectories produced by the multi-step teacher model, the few-step linear student model, and our ArcFlow.}
    \label{fig:framework}
\end{figure*}

\section{Method}
Pre-trained diffusion models follow PF-ODE integration trajectories with constantly varying tangents~\cite{song2021scorebasedgenerativemodelingstochastic}, whereas existing distillation methods~\cite{song2023consistencymodels,yin2024onestepdiffusiondistributionmatching,cheng2025twinflowrealizingonestepgeneration} approximate them using linear shortcuts, resulting in geometric mismatch.

In light of this, we propose ArcFlow, a text-to-image distillation framework that utilizes the notion of momentum process~\cite{goldstein2002classical} to construct non-linear flow trajectories across long timestep intervals. 
In this section, we first formalize the momentum-based parameterization (\cref{method1}), derive the analytical trajectory integration solver (\cref{method2}), and detail the trajectory distillation strategy to train ArcFlow (\cref{method3}). 
The condition variable (text prompt) is omitted from subsequent descriptions for brevity.
Since ArcFlow mainly relies on learning from a pre-trained teacher, we define the velocity field from the frozen teacher as ground truth for the subsequent analysis.

\subsection{Momentum Parameterization of Probability Flow}
\label{method1}

The Probability Flow ODE framework~\cite{song2021scorebasedgenerativemodelingstochastic} reveals that the diffusion process follows a continuous trajectory, where the denoising velocities are strongly correlated between adjacent timesteps. However, standard numerical solvers (e.g., Euler method~\cite{butcher2016numerical}) appoximate the integration process by taking discrete steps independently without considering their associations across timesteps. Thus, we argue that the standard multi-step sampling, which re-evaluates the network repeatedly to traverse this smooth trajectory, suffers from severe redundancy.

To explicitly exploit this inherent continuous evolution of the velocity field across timesteps, we introduce the notion of momentum in physics~\cite{goldstein2002classical}, to parameterize such properties. 
Specifically, let $\mathbf{v}\left(\mathbf{x}_t, t\right)$ denote the velocity field at timestep $t \in [0,1]$. The relationship between velocities at adjacent timesteps should follow a momentum transmission law parameterized by a factor $\gamma$. 
It implies that the velocity transfer as $\mathbf{v}\left(\mathbf{x}_t,t\right)=\mathbf{v}\left(\mathbf{x}_{t+\Delta t},t+\Delta t\right) \cdot \gamma^{\Delta t}$ from $t+\Delta t$ to $t$.

Recursively apply the above formula from a starting timestep $t_{s}$ to any ending timestep $t \in [0, t_{s})$, the velocity evolution is derived as follows:
\begin{equation}\small
    \mathbb{E}\left[ \mathbf{v}\left(\mathbf{x}_{t}, t\right) \mid \mathbf{v}\left(\mathbf{x}_{t_{s}}, t_{s}\right), \gamma \right] = \mathbf{v}\left(\mathbf{x}_{t_{s}}, t_{s}\right) \cdot \gamma^{t_{s}-t},
    \label{eq:momentum_recurrence}
\end{equation}
where $\gamma \in \mathbb{R}^+$.
Based on Eq.~\eqref{eq:momentum_recurrence}, given the initial velocity $\mathbf{v}\left(\mathbf{x}_{t_{s}},t_{s}\right)$ , $\mathbf{v}\left(\mathbf{x}_{t},t\right)$ at any timestep $t \in \left[0, t_{s}\right)$ can be extrapolated directly. Thus, our momentum parameterization allows flow matching to analytically predict the velocity at every timestep after only a single NFE, reaching $\mathbf{x}_{t}$ directly.

While momentum helps approximate velocity evolution, a single momentum factor $\gamma$ is insufficient to capture the hierarchical frequency dynamics in image generation. Empirical studies~\cite{choi2022perceptionprioritizedtrainingdiffusion} show that different frequency components evolve at distinct rates during denoising, implying that the corresponding velocity field $\mathbf{v}\left(\mathbf{x}_t,t\right)$ inherently consists of multiple evolution modes with different decay rates. 
To model such dynamics, as shown in \cref{fig:framework}(a), we formulate $\mathbf{v}\left(\mathbf{x}_t,t\right)$ as a probabilistic mixture of $K$ distinct momentum modes. 
Specifically, we decompose the velocity field into K different modes indexed by $z \in [1, ..., K]$ , and then derive the overall velocity field $\mathbf{v}_\theta\left(\mathbf{x}_t,t\right)$, optimized by parameter $\theta$:

\begin{equation}\small
    \begin{aligned}
    \mathbf{v}_\theta\left(\mathbf{x}_t, t\right)
    &= \mathbb{E}_{z \sim p_\theta\left(z|\mathbf{x}_t\right)}
    \left[ \mathbf{v}\left(\mathbf{x}_t, t \mid z\right) \right] \\
    &= {\sum}_{k=1}^K 
    \underbrace{p_\theta\left(z=k|\mathbf{x}_t\right)}_{\pi_k\left(\mathbf{x}_t\right)}
    \cdot
    \underbrace{\mathbf{v}_k\left(\mathbf{x}_t\right) \cdot \gamma_k\left(\mathbf{x}_t\right)^{1-t}}_{\text{Mode-specific Dynamics}},
    \end{aligned}
    \label{eq:mixture_expectation}
\end{equation}

where $\pi_k(\mathbf{x}_t) \in [0, 1]$ refers to the gating probability predicted by the parameter $\theta$, subject to $\sum \pi_k = 1$. $\mathbf{v}_k(\mathbf{x}_t) \in \mathbb{R}^D$ and $\gamma_k(\mathbf{x}_t) \in \mathbb{R}^+$ are the predicted basic velocity and momentum factor for the $k$-th mode.
Consequently, ArcFlow divides the trajectory into several mode-specific sub-trajectories, enabling each to be learned in a more targeted manner, thereby improving overall learning efficiency.

To further prove the rationality of our parameterization, we introduce a theorem showing that 
Eq.~\eqref{eq:mixture_expectation} with $K$ dynamic modes theoretically admits a parameter setting that perfectly fits the sampled trajectory at $N \leq K$ distinct timesteps. 

\begin{theorem}
\label{thm:velocity_alignment}
Consider the velocity field predicted by ArcFlow at any sampled latent $\mathbf{y}$ and timestep $t$, parameterized as $\mathbf{v}_{\theta}(\mathbf{y}, t)
= \sum_{k=1}^K \pi_k(\mathbf{y})\, \mathbf{v}_k(\mathbf{y})\, \gamma_k(\mathbf{y})^{1-t}$ according to Eq.~\eqref{eq:mixture_expectation}.
Let $\mathbf{u}^*(\mathbf{y}, t)$ denote the ground-truth velocity field, observed at $N$ distinct timesteps
$\mathcal{T} = \{t_1, \dots, t_N\} \subset (0,1]$.
If the number of modes satisfies $K \ge N$, then there exists a parameter configuration
$\theta = \{\pi_k, \mathbf{v}_k, \gamma_k\}_{k=1}^K$:
\begin{equation}\small
    \mathbf{v}_{\theta}(\mathbf{y}, t_n) = \mathbf{u}^*(\mathbf{y}, t_n),
    \quad \forall t_n \in \mathcal{T},
\end{equation}
\end{theorem}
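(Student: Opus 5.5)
The plan is to fix the latent $\mathbf{y}$ and treat the parameters $\pi_k,\mathbf{v}_k,\gamma_k$ as free constants, since the statement only concerns this one $\mathbf{y}$. The interpolation conditions are then a finite linear system in the unknown vectors. Write $\mathbf{c}_k := \pi_k \mathbf{v}_k \in \mathbb{R}^D$ and $s_n := 1-t_n \in [0,1)$. The $s_n$ are distinct because the $t_n$ are. The $N$ conditions become
\begin{equation}\small
\sum_{k=1}^K \mathbf{c}_k\, \gamma_k^{s_n} = \mathbf{u}^*(\mathbf{y}, t_n), \qquad n=1,\dots,N .
\end{equation}
This is linear in the $\mathbf{c}_k$, and it decouples coordinatewise. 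The $D$ coordinates share the same $N\times K$ coefficient matrix $G_{nk} = \gamma_k^{s_n}$.

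First I would reduce to the case $K=N$. For any extra modes $k>N$, I set $\mathbf{v}_k=0$, with arbitrary $\gamma_k>0$, so they contribute nothing.

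Next I would choose distinct momentum factors $\gamma_1,\dots,\gamma_N>0$, for instance $\gamma_k = e^{\lambda_k}$ with distinct real $\lambda_k$. Then $G_{nk} = e^{\lambda_k s_n}$. This is a generalized Vandermonde (exponential) matrix with distinct nodes $s_n$ and distinct exponents $\lambda_k$, so it is nonsingular. The cleanest route is to pick $\lambda_k = (k-1)\lambda$ for some $\lambda\neq 0$. Then $G_{nk} = (e^{\lambda s_n})^{k-1}$ is an ordinary Vandermonde matrix in the distinct points $e^{\lambda s_n}$, and its determinant is nonzero. I would use this explicit choice rather than invoking the general total positivity of exponential kernels. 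Invertibility of $G$ gives a unique solution $\mathbf{C} = G^{-1}\mathbf{U}$, where $\mathbf{U}$ stacks the targets $\mathbf{u}^*(\mathbf{y},t_n)$ as rows.

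The remaining step is to recover $\pi_k$ and $\mathbf{v}_k$ from the $\mathbf{c}_k$ while respecting the simplex constraint. I take $\pi_k = 1/K$ for all $k$ and set $\mathbf{v}_k = K\mathbf{c}_k$ for $k\le N$ (with $\mathbf{v}_k=0$ for $k>N$). This satisfies $\pi_k\in[0,1]$, $\sum_k\pi_k=1$ and $\gamma_k\in\mathbb{R}^+$, and $\mathbf{v}_k\in\mathbb{R}^D$ is unconstrained. So the configuration is admissible, and substituting back yields $\mathbf{v}_\theta(\mathbf{y},t_n)=\mathbf{u}^*(\mathbf{y},t_n)$ for all $n$.

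The only genuine obstacle is the nonsingularity of $G$, and the Vandermonde choice of $\gamma_k$ handles it. A secondary point is that $t_n\in(0,1]$ allows $s_n=0$ when $t_n=1$. That case is harmless, since it just contributes the node $e^{0}=1$, which stays distinct from the others. I would also remark that the construction is pointwise in $\mathbf{y}$. Realizability by a network $\theta$ is a separate expressivity question, which the statement does not ask for.
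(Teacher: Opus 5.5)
Your proof is correct and shares the paper's skeleton. Both arguments absorb $\pi_k$ into the composite weights $\pi_k\mathbf{v}_k$, decouple the $D$ coordinates, fix the $\gamma_k$, and reduce everything to invertibility of the matrix with entries $\gamma_k^{1-t_n}$. Where you differ is the invertibility step.

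\textbf{The paper's route.} It fixes \emph{arbitrary} distinct $\gamma_k>0$ and shows that $\{\gamma_k^{1-t}\}_{k=1}^N$ is a Chebyshev system. It does this by rewriting a combination as $\sum_k\alpha_k e^{\lambda_k t}$ and inducting on $N$ via Rolle's theorem applied to $e^{-\lambda_1 t}F_N(t)$. It then invokes the Haar condition.

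\textbf{Your route.} You pick $\gamma_k=e^{(k-1)\lambda}$ with $\lambda\neq 0$, which turns the matrix into an ordinary Vandermonde matrix in the distinct nodes $e^{\lambda(1-t_n)}$. This is shorter and fully elementary. It also avoids a small loose end in the paper's induction. When $\alpha_2=\cdots=\alpha_N=0$, the derivative $G_N'$ is the trivial combination, so the inductive hypothesis does not apply and that case has to be handled separately.

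\textbf{What each buys.} The paper's argument gives a stronger statement: exact interpolation works for \emph{every} choice of distinct momentum factors. So its expressivity claim does not depend on a special placement of the $\gamma_k$. Your argument does extend to any geometric progression $\gamma_k=\gamma_1 r^{k-1}$ with $r\neq 1$, since that only rescales row $n$ by $\gamma_1^{1-t_n}>0$.

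\textbf{Bookkeeping.} Yours is more careful than the paper's. You handle $K>N$ explicitly by zeroing the surplus modes, which the paper leaves implicit. Your choice $\pi_k=1/K$, $\mathbf{v}_k=K\mathbf{c}_k$ respects $\sum_k\pi_k=1$. The paper's suggested $\pi_k=1$ for all $k$ violates the simplex constraint whenever $K>1$.
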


We prove the theorem in \cref{proof:velocity_alignment}. 
The theoretical result validates that the momentum-based parameterization is capable of approximating the ground-truth velocity field in a non-linear way, ensuring high-precision distillation.


\subsection{Analytic ODE Solvers}
\label{method2}
As described above, ArcFlow parameterizes the velocity field as a mixture of momentum modes (Eq.~\eqref{eq:mixture_expectation}), which is mathematically equivalent to a linear combination of exponential time factors. This structure admits closed-form integration over arbitrary timestep intervals, allowing accurate latent updates with very few steps.

Concretely, for a sampling step from timestep $t_s$ to $t_e$ ($t_s > t_e$), we define the Analytic Transition Operator $\Phi$ as the latent displacement $ \Delta \mathbf{x}_{t_s \rightarrow t_e} $ induced by the velocity field $\mathbf{v}_\theta(\mathbf{x}_{t}, t)$. By analytically integrating this velocity based on Eq.~\eqref{eq:mixture_expectation} across timesteps, 
$\Phi(\mathbf{x}_{t_s}, t_s, t_e; \theta)$ admits the following closed-form expression:
\begin{equation}\small
\begin{aligned}
\Phi(\mathbf{x}_{t_s}, t_s, t_e; \theta)
&\triangleq \Delta \mathbf{x}_{t_s \rightarrow t_e} \\
&= \sum_{k=1}^K
\pi_k(\mathbf{x}_{t_s})\,
\mathbf{v}_k(\mathbf{x}_{t_s})\,
\mathcal{C}\!\left(\gamma_k(\mathbf{x}_{t_s}),\, t_s,\, t_e\right).
\end{aligned}
\label{eq:analytic_transition_main}
\end{equation}
where the Momentum Integral Coefficient $\mathcal{C}(\cdot)$ is defined as
\begin{equation}\small
\mathcal{C}(\gamma, t_s, t_e) =
\begin{cases}
\dfrac{\gamma^{1-t_e} - \gamma^{1-t_s}}{\ln \gamma}, & \gamma \neq 1, \\[6pt]
t_s - t_e, & \gamma = 1 ,
\end{cases}
\label{eq:momentum_coeff}
\end{equation}
The full derivation of Eq.~\eqref{eq:analytic_transition_main} and Eq.~\eqref{eq:momentum_coeff} is provided in \cref{appx:analytic}.

Crucially, the coefficient $\mathcal{C}(\gamma, t_s, t_e)$ smoothly reduces to the linear form $t_s - t_e$ as $\gamma \to 1$ (see \cref{appx:analytic}). This ensures numerical stability of our solver at the singularity, showing that our parameterization seamlessly bridges non-linear dynamics ($\gamma \neq 1$) and the linear flow regime ($\gamma = 1$).

Consequently, for any arbitrary step from $t_s$ to $t_e$, the next latent is given explicitly by $\mathbf{x}_{t_e} = \mathbf{x}_{t_s} - \Phi(\mathbf{x}_{t_s}, t_s, t_e; \theta)$,
allowing direct integration to target states. 
As shown in \cref{fig:framework}(b), while previous few-step students rely on very few straight sub-lines to fit a multi-step teacher trajectory whose tangent directions rapidly changing, leading to a poor approximation, ArcFlow trajectory from our analytic solver can naturally inherit the non-linearity and better align with the teacher’s overall trajectory.

\subsection{Flow Distillation with Analytic Solvers}
\label{method3}
Since ArcFlow naturally aligns with the teacher trajectory, we propose a practical flow distillation strategy based on a pre-trained teacher model.
As directly synthesizing the trajectory is infeasible within the flow matching framework which only predicts the velocity field, we reconstruct the teacher's trajectory by aligning its tangent direction (instantaneous velocity) at every timestep. 
For ArcFlow, this tangent is analytically derived via Eq.~\eqref{eq:mixture_expectation}, so the trajectory alignment reduces to a velocity-matching objective: minimizing the discrepancy between student and teacher instantaneous velocities at the sampled $(\mathbf{x}_{t}, t)$ pairs.

As shown in \cref{alg:ArcFlow_distill}, we propose a flow distillation method. For each timestep interval $[t_\mathrm{dst},t_\mathrm{src}]$, we train ArcFlow within this interval by iterating two steps as follows. 

\noindent\textbf{Mixed Latent Integration.} To enable the student to learn the teacher's velocity field over the whole interval $[t_{\mathrm{dst}},t_{\mathrm{src}}]$, we sample $n$ intermediate timesteps $\{t_1,\dots,t_n\}$ and construct the corresponding latents $\{\mathbf{x}_{t_i}\}$.  
Let $t_{\mathrm{src}}=t_0$ and $t_{\mathrm{dst}}=t_{n+1}$, each target latent $\mathbf{x}_{t_{i+1}}$ is then sequentially obtained by integrating over each sub-interval $[t_i,t_{i+1}]$.  
Within each $[t_i,t_{i+1}]$, we apply a mixed integration curriculum as training progresses: early training mainly follows teacher guidance to keep latents on the teacher manifold, while the student progressively takes over to gain the self-correction ability on its own generated latents.

Concretely, for each sub-interval $[t_i, t_{i+1}]$, we introduce a switching timestep $t_{\mathrm{mix}} = t_i - (1-\lambda)(t_i - t_{i+1})$,
where $\lambda$ is gradually increased from $0$ to $1$ during training.
Starting from $\mathbf{x}_{t_i}$, the teacher integrates the latent from $t_i$ to $t_{\mathrm{mix}}$, after which the student completes the integration to $t_{i+1}$.
This sequential handoff yields:
\begin{equation}\small
\mathbf{x}_{t_{i+1}} = \mathbf{x}_{t_i}
+ \int_{t_{\mathrm{mix}}}^{t_i} \mathbf{u}(\mathbf{x}_{t_i}, t_i)\,dt
+ \int_{t_{i+1}}^{t_{\mathrm{mix}}} \mathbf{v}(\mathbf{x}_t, t;\Theta)\,dt.
\label{eq:sequential_rollout}
\end{equation}
Here, $\mathbf{u}(\mathbf{x}_{t_i},t_i)$ is the instantaneous velocity predicted by the teacher at $(\mathbf{x}_{t_i},t_i)$; $\mathbf{v}(\mathbf{x}_t,t;\Theta)$ represents the velocity derived from momentum parameters $\Theta$ predicted by ArcFlow at $t_0$. Implementation details of the mixed integration are provided in the \cref{appx:mixed_trajectory}.

With every target latent state $\mathbf{x}_{t_i}$ obtained, we detach it from the computation graph and use it as the anchor for the velocity alignment step detailed below.

\noindent\textbf{Instantaneous Velocity Matching.}
At each $\mathbf{x}_{t_i}$, we proceed to align the velocity field predicted by the student with that of the teacher.
We compute the instantaneous velocity $\mathbf{v}(\mathbf{x}_{t_i}, {t_i}; \Theta) $ predicted by the student parameter $\Theta$ via Eq.~\eqref{eq:mixture_expectation}, and obtain the target velocity $\mathbf{u}(\mathbf{x}_{t_i}, {t_i})$ by evaluating the teacher network.
The optimization objective:
\begin{equation}\small
    \mathcal{L}_{\text{distill}} = \mathbb{E}_{{t_i}, \mathbf{x}_{t_i}} \left[ \left\| \mathbf{v}(\mathbf{x}_{t_i}, {t_i}; \Theta) - \mathbf{u} (\mathbf{x}_{t_i}, {t_i})\right\|^2 \right],
    \label{eq:distill_loss}
\end{equation}
Enforcing this loss ensures that the student's overall continuous trajectory adheres to the teacher's complex trajectory.

ArcFlow further simplifies this distillation process due to our momentum parameterization. 
As the momentum parameterization naturally inherits the non-linearity, matching the instantaneous velocity with very few timesteps ($n=2\sim4$) is sufficient for ArcFlow to learn the velocity field of the teacher, leading to high-precision restoration of the teacher trajectory and fast training process. Moreover, reduced distillation difficulty results in requiring fewer trainable parameters.  
While linear methods force the student to override the teacher's priors to fit linear rectification, which requires invasive full-parameter finetuning of large pre-trained models, ArcFlow naturally adapts to the non-linear trajectory. 
Empirically, we find that training only LoRA adapters on few layers and the output projection head is sufficient for convergence, which proves our assumption that ArcFlow enables efficient alignment with the teacher trajectory. 

\begin{table*}[t!]
\caption{Quantitative comparisons on Geneval, DPG-Bench and OneIG-Bench.$\dagger$ means the results are cited from pi-Flow~\cite{piflow} and TwinFlow~\cite{cheng2025twinflowrealizingonestepgeneration}. The NFE of Qwen-Image-20B is recorded as $50 \times2$ since it uses CFG~\cite{ho2022classifierfreediffusionguidance}.
}
\vspace{-0.1in}
\begin{center}
\renewcommand\arraystretch{1.1}
\scalebox{0.85}{
\begin{tabular}{lcccccccc}
\toprule
\multirow{2}{*}{Model}          & \multirow{2}{*}{NFE$\downarrow$}                                    & \multirow{2}{*}{Geneval$\uparrow$}                              & \multirow{2}{*}{DPG-Bench$\uparrow$}                                  & \multicolumn{5}{c}{OneIG-Bench}  \\ 
\cmidrule(lr){5-9}
& & & &
Alignment$\uparrow$ &
Text$\uparrow$ &
Diversity$\uparrow$ &
Style$\uparrow$ &
Reasoning$\uparrow$ \\

\midrule
FLUX.1-dev~\cite{flux2024}  & 50 & 0.66 & 84.16   &0.790$^\dagger$ &0.556$^\dagger$ &0.238$^\dagger$ &0.307$^\dagger$ &0.257$^\dagger$ \\ \midrule
SenseFlow (FLUX)~\cite{ge2025senseflowscalingdistributionmatching} & 2 &\underline{0.60} & 79.86&0.743 &\underline{0.230} &0.139 &\underline{0.341} &0.212 \\ 
Pi-Flow (GM-FLUX)~\cite{piflow} & 2&0.58 & \underline{82.36}&\underline{0.764} &0.141 &\textbf{0.216} &0.332 &0.212\\
ArcFlow-FLUX (Ours)      & 2 & \textbf{0.65} &\textbf{84.29} &\textbf{0.798} &\textbf{0.368} &\underline{0.210} &\textbf{0.350} &\textbf{0.224} \\ \midrule
Qwen-Image-20B~\cite{wu2025qwenimagetechnicalreport} & 50 $\times$ 2 & 0.87$^\dagger$ & 88.32$^\dagger$ &0.880$^\dagger$ &0.888$^\dagger$ &0.194$^\dagger$ &0.427$^\dagger$  & 0.306$^\dagger$ \\ \midrule
Qwen-Image-Lightning~\cite{qwen_image_lightning}    & 2  & \textbf{0.85} &\underline{88.42} &\underline{0.875} &\textbf{0.879} &0.098 &\underline{0.415} &\textbf{0.292} \\
pi-Flow (GM-Qwen)~\cite{piflow}    & 2 & 0.83 & 86.45& 0.837&0.634 &\underline{0.176} &0.382 &0.259 \\
TwinFlow (Qwen)~\cite{cheng2025twinflowrealizingonestepgeneration} &2&0.82 &87.01 & 0.862 &0.825 &0.130 &0.364 &0.267 \\ 
ArcFlow-Qwen (Ours)& 2  & \textbf{0.85} & \textbf{88.46} &\textbf{0.877} &\underline{0.853} &\textbf{0.182} &\textbf{0.421} &\underline{0.289} \\ \bottomrule
\end{tabular}
}
\end{center}
\label{table:quantitative_comparisons}
\end{table*}

\begin{table}[t!]
\caption{Quantitative comparisons on Align5000. FIDs and pFIDs are calculated against 50-step teacher generations. }
\vspace{-0.1in}
\begin{center}
\renewcommand\arraystretch{1.1}
\scalebox{0.85}{
\begin{tabular}{lcccc}
\toprule
Model & NFE$\downarrow$  & FID$\downarrow$ & pFID$\downarrow$ & CLIP$\uparrow$ \\
\midrule
FLUX.1-dev  & 50 & - & - &0.312  \\ \midrule
SenseFlow (FLUX) & 2 & \underline{27.55} &\textbf{9.25} &0.311 \\ 
Pi-Flow (GM-FLUX) & 2& 32.62 & 37.84 &\underline{0.314} \\
ArcFlow-FLUX (Ours)      & 2 & \textbf{16.83} &\underline{11.20} &\textbf{0.315} \\ \midrule
Qwen-Image-20B & 50 $\times$ 2 &- &- &0.325 \\ \midrule
Qwen-Image-Lightning    & 2  & 16.86 & 11.32 &0.320 \\
pi-Flow (GM-Qwen)    & 2 & 20.07 &12.42 &\underline{0.323} \\
TwinFlow (Qwen) &2 & \underline{16.77} & \underline{4.34} &0.320 \\ 
ArcFlow-Qwen (Ours)& 2 & \textbf{12.40} & \textbf{3.78} &\textbf{0.325} \\ \bottomrule
\end{tabular}
}
\end{center}
\label{table:quantitative_comparisons_2}
\end{table}

\begin{figure*}[t]
\begin{center}
\includegraphics[width=1\linewidth]{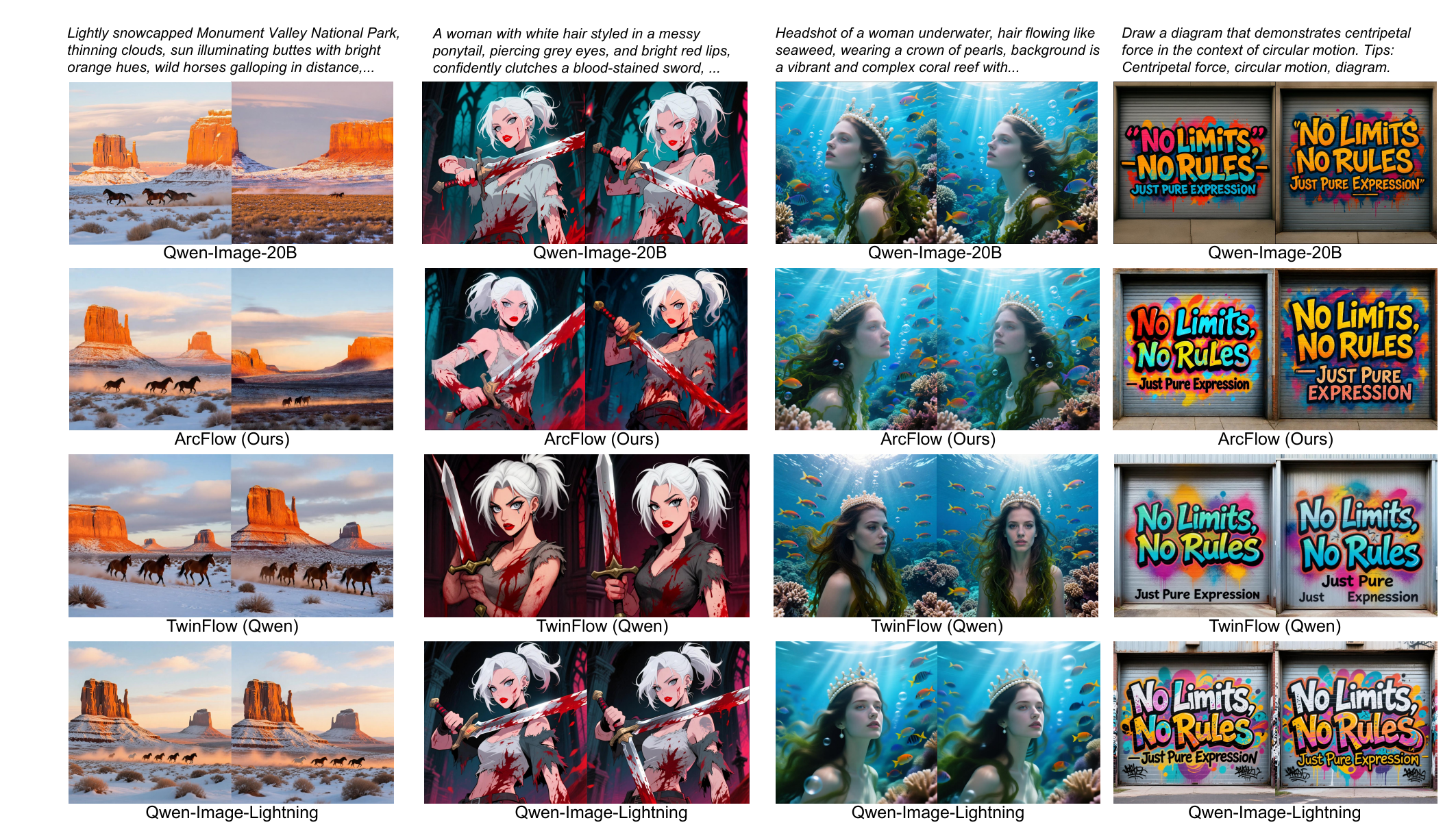}
\end{center}
\vspace{-0.3cm}
   \caption{Qualitative comparisons with methods distilled on Qwen-Image-20B (2NFE). Every column contains two images which are generated from the same batch of initial noise. 
   ArcFlow generates diverse samples that better align with teacher than competitors. 
   }
\label{fig:comparison_study}
\vspace{-0.40cm}
\end{figure*}

\begin{figure*}
\begin{center}
\includegraphics[width=0.85\linewidth]{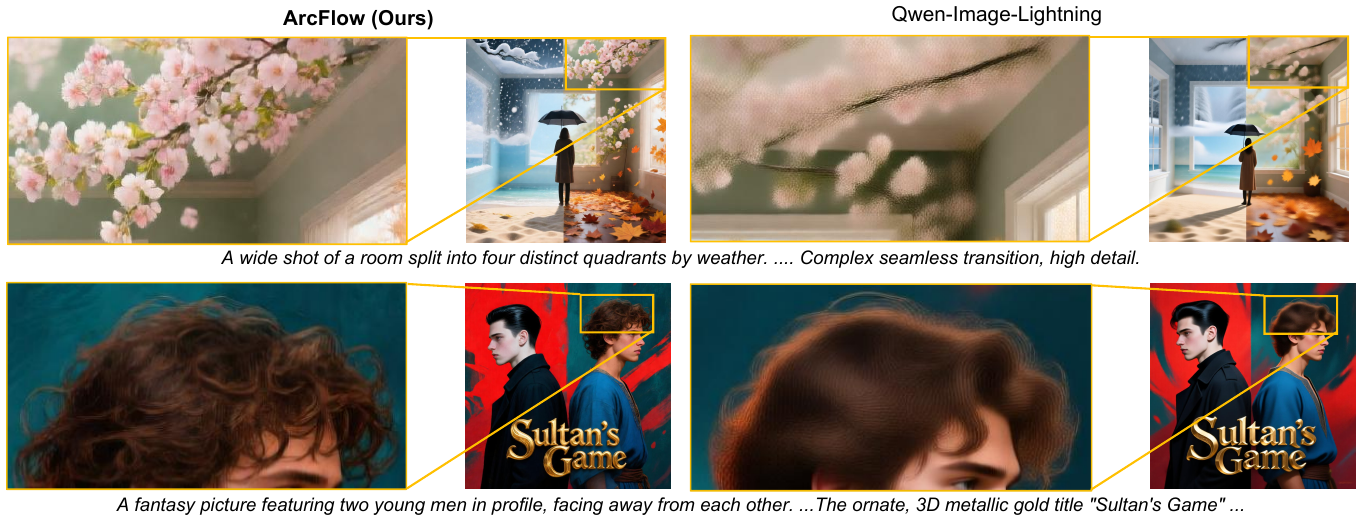}
\end{center}
\vspace{-0.3cm}
   \caption{Qualitative comparisons with Qwen-Image-Lightning. Our ArcFlow exhibits visibly clearer details. }
\label{fig:zoomin_comparison}
\vspace{-0.40cm}
\end{figure*}

\section{Experiments}
\subsection{Implementation Details}
\label{implementation_details}
We apply our distillation framework to two text-to-image models: Qwen-Image-20B~\cite{wu2025qwenimagetechnicalreport} and FLUX.1-dev~\cite{flux2024}.
As a parameter-efficient strategy, we freeze the vast majority of the backbone and train only 256-rank LoRA adapters injected into the feed-forward layers along with the final output projection head to accommodate the momentum parameter predictions.
We train ArcFlow on a large-scale prompt dataset (2.3 million samples) introduced by pi-Flow~\cite{piflow}. 
We provide more training details in 
\cref{appx:training_detail}.

We conduct evaluation on $1024 \times 1024$ image generation from three distinct benchmarks: (1) Geneval~\cite{ghosh2023genevalobjectfocusedframeworkevaluating} (complex object combination), (2) DPG-Bench~\cite{hu2024ella} (dense and long prompts), (3) OneIG-Bench~\cite{chang2025oneigbenchomnidimensionalnuancedevaluation} (complex prompts from distinct aspects). 
We additionally collect another dataset with 5,000 prompts, referred to the Align5000, composed of 3,200 prompts from HPSv2 prompt set ~\cite{wu2023humanhpsv2} and 1,800 prompts randomly sampled from the COCO 2014 validation set~\cite{lin2015microsoftcococommonobjects}. This combination covers both diverse artistic styles (HPSv2) and natural image distributions (COCO), enabling a more comprehensive evaluation of teacher alignment and distributional fidelity. Regarding metrics, the FIDs and patch FIDs (pFIDs) are computed against the 50-step teacher model to evaluate students' alignment with the teacher, and the CLIP similarity score measures the prompt alignment ability. In the patch FID metric, patch size is set to 64, and stride is set to 128.

\subsection{Comparison Study}
\label{subsec:comparison_study}

\noindent\textbf{Quantitative Results. }
We compare with recent few-step generative models distilled from the same teacher.
For FLUX.1-dev~\cite{flux2024}, we compare against: SenseFlow~\cite{ge2025senseflowscalingdistributionmatching}, which uses DMD; pi-Flow (GM-FLUX)~\cite{piflow}, which approximates the linear step with policy. For Qwen-Image-20B~\cite{wu2025qwenimagetechnicalreport}, we compare with: Qwen-Image-Lightning~\cite{qwen_image_lightning} based on VSD; TwinFlow~\cite{cheng2025twinflowrealizingonestepgeneration} based on self-adversarial loss; pi-Flow (GM-Qwen).
All models are set to NFE=2.

We observe that ArcFlow consistently outperforms or remains competitive with state-of-the-art few-step models across the three distinct benchmarks, demonstrating robust alignment with complex instructions. Specifically, while adversarial-based methods (e.g., Qwen-Image-Lightning) suffer from mode collapse (losing diversity to improve semantic alignment), ArcFlow achieves a substantial +85.7$\%$ improvement in Diversity on OneIG-Bench, proving that our parameterization effectively preserves the teacher's pre-trained priors and the generative diveristy. Furthermore, as shown in Table~\ref{table:quantitative_comparisons_2}, ArcFlow achieves the lowest FID and pFID across both backbones, indicating that our method ensures a significantly more precise alignment with the teacher’s generation compared to other linear shortcut baselines. Notably, although Qwen-Image-Lightning achieves competitive prompt-following scores, its inferior FID highlights a trade-off where perceptual optimization compromises trajectory fidelity, whereas ArcFlow maintains high fidelity to the original distribution (see \cref{appx:discussion_lightning} for detailed discussion).

\noindent\textbf{Qualitative Results.}
We conduct a qualitative comparison between ArcFlow and prior state-of-the-art methods by generating images from the same batch of initialized noise and comparing them with the teacher outputs, as shown in \cref{fig:comparison_study}. Linear distillation methods, including TwinFlow and Qwen-Image-Lightning, exhibit clear mode collapse and quality degradation, often producing nearly identical samples. Moreover, TwinFlow suffers from degraded visual aesthetics (the 3rd column), while Qwen-Image-Lightning shows blurred textures (background in the 3rd column) and structural artifacts (bent or duplicated swords in the 2nd column). These failures reveal a fundamental limitation of linear-step distillation methods in comprehensively approximating the teacher trajectory.

In contrast, at the same batch of initialized noise, ArcFlow consistently preserves both high visual quality and generation diversity, producing results that are more closely aligned with the teacher. This proves its superiority in both high-quality generation and high-precision approximation of the teacher, ensuring generation diversity.

Although Qwen-Image-Lightning achieves competitive quantitative performance in the benchmarks, further zoomed-in comparisons in \cref{fig:zoomin_comparison} show that ArcFlow yields noticeably finer and more coherent details. We attribute this discrepancy to the training objective of Qwen-Image-Lightning, which may sacrifice fine-grained visual fidelity for better semantic alignment objective, underscoring the inherent challenge of linear-step trajectory approximation. More discussion is provided in \cref{appx:discussion_lightning}.

\noindent\textbf{Convergence Speed and Stability.}
To validate the convergence speed and training stability of ArcFlow, we respectively distill ArcFlow, pi-Flow and TwinFlow based on Qwen-Image-20B. For pi-Flow and TwinFlow, we conduct training as guided in their official codebase.
We utilize the same training dataset as depicted in \cref{implementation_details}, and train models with a batch size of 16. We use the FID of Align5000 as the evaluation metric to measure the alignment between students and the teacher, and we evaluate the model at an iteration interval of 500 training steps.
As shown in \cref{fig:convergence}, ArcFlow converges significantly faster and with more stable FID reduction compared to other models. This validates that ArcFlow can efficiently leverage the pre-trained teacher weights, requiring only minor adaptation to reach near-optimal alignment. In contrast, TwinFlow, which uses full-parameter training, must override the teacher’s pre-trained weights due to the geometric mismatch, leading to high-error initial parameter state and slow convergence.
Notably, ArcFlow surpasses the FID of Qwen-Image-Lightning after only 1,000 training steps, demonstrating its efficiency in distillation training. We further visualize this convergence process comparison in \cref{appx:fig_convergence_visualization} and provide detailed analysis in \cref{appx:convergence_visualization}, which demonstrates the ArcFlow's superiority in inheriting and adapting to the pre-trained teacher knowledge, leading to efficient high-precision distillation.

\begin{table}[t]
\centering
\begin{minipage}{0.31\linewidth}
  \centering
  \caption{Momentum factor $\gamma$.}
  \vspace{-0.05in}
  \begin{small}
    \begin{sc}
      \begin{tabular}{lcc}
        \toprule
        $\gamma$ settings & FID $\downarrow$\\
        \midrule
        $\gamma \equiv 1$ & 17.06 \\
        $\gamma$ fixed    & 14.77 \\
        $\gamma$ learnable & 14.56 \\
        \bottomrule
      \end{tabular}
    \end{sc}
  \end{small}
  \vspace{0.5em}
  \label{tab:gamma}
\end{minipage}
\hfill
\begin{minipage}{0.31\linewidth}
  \centering
  \caption{$(N_v, N_\gamma)$. $K$ set to 16. }
  \vspace{-0.05in}
  \begin{small}
    \begin{sc}
      \begin{tabular}{lcc}
        \toprule
        $(N_v, N_\gamma)$ & FID $\downarrow$ \\
        \midrule
        $(K, 1)$ & 15.08 \\
        $(1, K)$ & 14.97 \\
        $(K, K)$ & 14.56 \\
        \bottomrule
      \end{tabular}
    \end{sc}
  \end{small}
  \label{tab:couples}
\end{minipage}
\hfill
\begin{minipage}{0.31\linewidth}
  \centering
  \caption{Numbers of momentum modes $K$.}
  \vspace{-0.05in}
  \begin{small}
    \begin{sc}
      \begin{tabular}{lcc}
        \toprule
        $K$ & FID $\downarrow$ & pFID $\downarrow$ \\
        \midrule
        8  & 12.54 & 4.17 \\
        16 & 12.4 & 3.78 \\
        32 & 12.39 & 3.69 \\
        \bottomrule
      \end{tabular}
    \end{sc}
  \end{small}
  \label{tab:ablation_k}
\end{minipage}
\vskip -0.2in
\end{table}

\begin{figure}
\begin{center}
\includegraphics[width=1\linewidth]{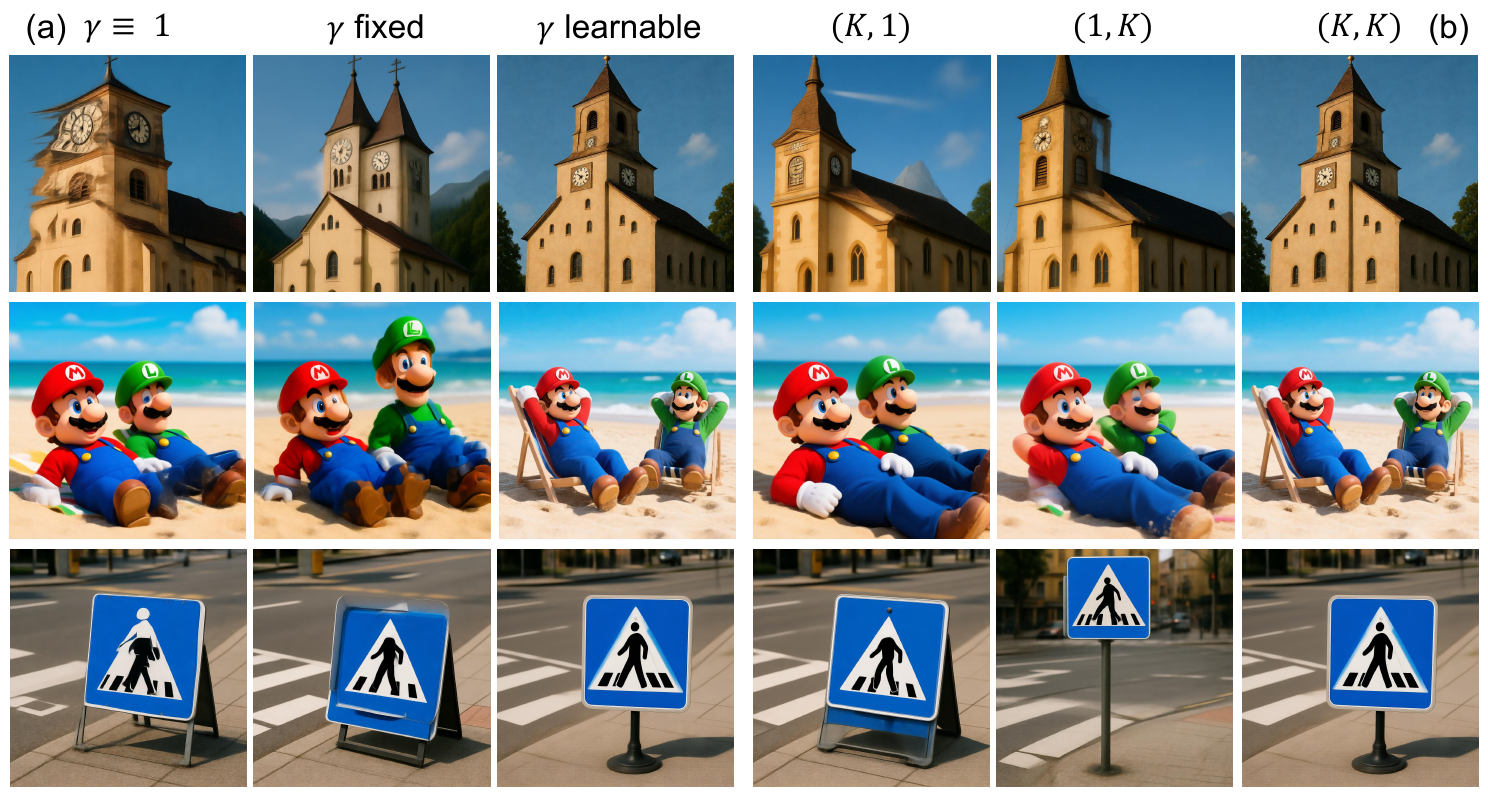}
\end{center}
\vspace{-0.4cm}
   \caption{Ablation qualitative results on core settings. 
   (a) Comparison of different momentum settings $\gamma$. (b) Comparison of different mixture configurations $(N_v, N_\gamma)$.
   }
\label{fig:ablation_comparison}
\end{figure}

\subsection{Ablation Study}

\noindent\textbf{Impact of Momentum Dynamics $\gamma$.}
We investigate the necessity of the momentum factor $\gamma$ in approximating trajectory tangent variation. All experiments are conducted on the Align5000 prompt set with 1,500 training steps.
As shown in \cref{tab:gamma}, setting $\gamma \equiv 1$ removes the explicit momentum factor from the parameterization.
In this setting, the model must rely solely on the velocity mixture to approximate the trajectory evolution, which forces the predicted velocity to implicitly compensate for the overall dynamics information, resulting in suboptimal alignment and inferior FID scores.
Then, introducing fixed momentum factors brings non-linearity into the trajectory and yields consistent improvements, demonstrating the benefit of explicit employment of non-linear trajectories.
Furthermore, making $\gamma$ learnable leads to the best performance, suggesting that adaptive momentum factors better capture the varying trajectory behaviors across different samples and timesteps.
\cref{fig:ablation_comparison}(a) highlights the importance of adaptively employing momentum for precise teacher–student alignment.

\noindent\textbf{Decoupling Velocity and Momentum Mixtures.}
Given the necessity of adaptive non-linearity, we further examine how each mixture component should be parameterized. We denote the configuration as $(N_v, N_\gamma)$, representing the number of independent basic velocities and momentum factors used across the $K$ mixture modes.
We compare our default $(N_v, N_\gamma)=(K,K)$ against two restricted variants: $(K,1)$, which forces diverse velocity directions to evolve under a unified motion pattern, and $(1,K)$, which restricts diverse momentum dynamics to start from the same basic velocities.
We train 1,500 steps for each experiment. \cref{tab:couples} and \cref{fig:ablation_comparison}(b) show that neither restricted setting is competitive with the performance of $(K,K)$.
This confirms that decoupling velocity and momentum clarifies the optimization task, while constraining either factor forces the remaining parameters to implicitly compensate for the missing dynamics, creating an overloaded and ambiguous learning target.

\noindent\textbf{Scalability of Mixture Components $K$.}
We study the effect of the mixture size $K$ by evaluating ArcFlow with $K \in \{8, 16, 32\}$, while keeping all other settings fixed.
As shown in \cref{tab:ablation_k}, increasing $K$ generally improves performance, indicating that a richer mixture enhances the model’s ability to capture non-linear trajectory tangent variation.
While $K=32$ achieves slightly better FID and pFID than $K=16$, the improvement is marginal.
Considering the diminishing returns, the increased parameter, and computational cost, we adopt $K=16$ as the default configuration, which offers a favorable trade-off between expressiveness and efficiency in the few-step distillation regime.
\section{Conclusion}
In this paper, we proposed ArcFlow, a few-step distillation framework that explicitly employs non-linear trajectories to approximate the complex dynamics of pre-trained diffusion teachers.
By parameterizing the velocity field as a mixture of continuous momentum processes, ArcFlow admits a closed-form analytic solver and enables accurate trajectory integration. We further introduced a flow distillation strategy to align the student’s analytical trajectory with the teacher.
Benefiting from its intrinsic non-linearity, ArcFlow ensures high-precision alignment with the teacher. Moreover, it avoids unstable adversarial objectives and invasive full-parameter training, leading to faster convergence and more efficient distillation. Extensive experiments demonstrated that ArcFlow consistently achieves superior generation quality with fewer trainable parameters compared to linear baselines.
We believe ArcFlow highlights the importance of respecting the underlying flow dynamics for efficient generative inference.
\section*{Impact Statement}

This paper aims to advance the efficiency of image generation models by enabling high-quality few-step inference through improved distillation techniques. Such progress can facilitate broader accessibility and deployment of generative models in practical applications, including creative tools, simulation, and content generation.

At the same time, as with prior work on image generation, our method could potentially be misused for generating misleading or harmful visual content. These concerns are not unique to our approach and are inherent to the broader class of generative image models. We emphasize that responsible deployment, including appropriate content moderation, usage policies, and the development of reliable AI-generated content detection mechanisms, remains important.

We believe that the technical contributions of this work primarily improve inference efficiency and fidelity, without introducing new ethical risks beyond those already present in existing diffusion-based image generation systems.

\bibliographystyle{plainnat}
\bibliography{main}
\clearpage
\newpage
\beginappendix
\begin{algorithm}[t]
\caption{Flow Distillation for ArcFlow}
\label{alg:ArcFlow_distill}
\begin{algorithmic}[1]
\REQUIRE NFE, Teacher $G_\psi$, Student $G_\phi$, Ratio $\lambda$
\STATE Sample start timestep $t_{\text{src}} \in \{ \frac{1}{\text{NFE}}, \dots, 1 \}$
\STATE Initialize $\mathbf{x}_{t_{\text{src}}}$ 
\STATE Sample timesteps $\{t_1, \dots, t_K\} \subseteq [t_{\text{src}} - \frac{1}{\text{NFE}}, t_{\text{src}}] $
\STATE $\Theta \leftarrow G_\phi(\mathbf{x}_{\text{src}}, t_{\text{src}})$;  
\FOR{$t \in \{t_1, \dots, t_K\}$}
    \STATE $\mathbf{x}_t \leftarrow \text{MixedIntegration}(\mathbf{x}_{\text{src}}, t_{\text{src}}, t; {\Theta}, \lambda)$
    \STATE $\hat{\mathbf{x}_t} \leftarrow \text{stopgrad}(\mathbf{x}_t)$
    \STATE $\mathbf{v}_{\text{stu}} \leftarrow \mathbf{v}(\hat{\mathbf{x}_t}, t; \Theta)$ 
    \STATE $\mathbf{u} \leftarrow G_\psi(\hat{\mathbf{x}_t}, t)$    
    \STATE $\mathcal{L} \leftarrow \mathcal{L} + \| \mathbf{v}_{\text{stu}} - \mathbf{u} \|^2$
\ENDFOR

\STATE Update $\phi$ using $\nabla_\phi \mathcal{L}$
\end{algorithmic}
\end{algorithm}

\section{Preliminaries}
In this section, we first introduce the Flow Matching framework and the Probability Flow ODE (PF-ODE). 
We then discuss the numerical simulation of this ODE, highlighting the difference between multi-step integration solvers and few-step solvers via distillation.

\noindent\textbf{Flow Matching and Probability Flow ODE.}
Let $p(\mathbf{x}_0)$ denote the data distribution and $p(\mathbf{x}_1) \sim \mathcal{N}(\mathbf{0}, \mathbf{I})$ the noise distribution. Flow Matching~\cite{lipman2023flowmatchinggenerativemodeling} defines a probability trajectory that iteratively transforms $p(\mathbf{x}_1)$ to $p(\mathbf{x}_0)$ over timesteps $t \in [0, 1]$. This transformation is driven by the probability flow ODE, which is defined as:
\begin{equation}
    \dfrac{d\mathbf{x}_t}{dt} = \mathbf{u}^*(\mathbf{x}_t, t),
\end{equation}
Integrating this velocity field $\mathbf{u}^*(\cdot)$ over timestep $t \in [0,1]$ forms the flow trajectories from noise $\mathbf{x}_1$ to data $\mathbf{x}_0$, which we term as trajectory integration.

To construct this velocity field, we use the Conditional Flow Matching (CFM) ~\cite{lipman2023flowmatchinggenerativemodeling} formulation. We define a linear trajectory between a data sample $\mathbf{x}_0$ and noise $\mathbf{x}_1$ to derive the latent $\mathbf{x}_t$ at any timestep $t$:
\begin{equation}
    \mathbf{u}(\mathbf{x}_t | \mathbf{x}_0, \mathbf{x}_1) = \frac{d}{dt} \mathbf{x}_t = \mathbf{x}_1 - \mathbf{x}_0,
\end{equation}
Therefore, the practical objective is to train a flow-matching model to approximate the marginal velocity field $\mathbf{u}_t(\cdot)$ via a predicted velocity field $\mathbf{v}_\mathbf{\theta}(\mathbf{x}, t)$ parameterized by $\mathbf{\theta}$, through the minimization of the expectation over these conditional flow trajectories: 
\begin{equation}
    \mathcal{L}_{\text{FM}}(\mathbf{\theta}) = \mathbb{E}_{t, p(\mathbf{x}_0), p(\mathbf{x}_1)} \left[ \| \mathbf{v}_\mathbf{\theta}(\mathbf{x}_t, t) - \mathbf{u}(\mathbf{x}_t | \mathbf{x}_0, \mathbf{x}_1) \|^2 \right],
    \label{eq:fm_loss}
\end{equation}

Once optimized, numerical solvers are employed to integrate the learned PF-ODE $v_\theta$, formulated as $\dfrac{d\mathbf{x}_t}{dt} = \mathbf{v}_\mathbf{\theta}(\mathbf{x}_t, t)$, enabling deterministic data generation.

\noindent\textbf{ODE Sampling and Distillation.}
Inference requires integrating the PF-ODE in timesteps from $t=1$ to $t=0$, yielding the sample $\hat{\mathbf{x}}_0 = \mathbf{x}_1 + \int_{1}^{0} v_\theta(\mathbf{x}_\tau, \tau) d\tau$. Since this integral is analytically intractable, numerical solvers (e.g., Euler or Heun~\cite{butcher2016numerical}) are employed to approximate the integration by accumulating discrete velocities in multiple timesteps. To minimize the discretization error, this approximation typically involves $40\sim100$ function evaluations (NFEs). With step size $\Delta t$, an Euler solver iteratively updates $x_t$ with the formula as follows:
\begin{equation}
    \mathbf{x}_{t-\Delta t} = \mathbf{x}_t - v_\mathbf{\theta}(\mathbf{x}_t, t) \cdot \Delta t,
\end{equation}
However, this iterative process imposes a significant computational bottleneck. To mitigate this, knowledge distillation is widely adopted, where the student $v_\mathbf{\phi}$ is trained to emulate the behavior of the teacher $v_\mathbf{\theta}$ across multiple timesteps, allowing it to traverse the trajectory with fewer NFEs. The standard distillation objective is typically formulated as a regression problem, where the student minimizes the discrepancy between its predicted velocity field and a target signal $\mathbf{y}_t$ derived from the teacher:
\begin{equation}
    \mathcal{L}_{\text{KD}}(\phi) = \mathbb{E}_{t, \mathbf{x}_t} \left[ \| v_\phi(\mathbf{x}_t, t) - \mathbf{y}_t(\mathbf{x}_t, v_\theta) \|^2 \right],
    \label{eq:distill_general}
\end{equation}
where $\mathbf{y}_t$ refers to the supervision target from the teacher, which varies depending on the specific method. 

\section{Discussions on Qwen-Image-Lightning}
\label{appx:discussion_lightning}

Although Qwen-Image-Lightning achieves competitive performance on prompt-alignment benchmarks, its inferior FID and pFID indicate clear degradation in generation quality at the distribution level. In particular, lower FID typically reflects distorted local statistics and weakened high-frequency details, such as blurred textures and over-smoothed structures, which are not captured by prompt-alignment metrics that primarily emphasize semantic correctness and global visual saliency. As a result, this indicates that although Qwen-Image-Lightning can generate images that remain semantically consistent with the prompt, it deviate noticeably from the teacher’s generation distribution in fine-grained structure and detail. 

Moreover, this discrepancy reveals a fundamental difference in distillation efficiency and objective. Qwen-Image-Lightning focuses on achieving perceptual and semantic alignment under limited steps, but does not explicitly enforce high-precision alignment with the teacher’s underlying generation trajectory, leading to a partial loss of the teacher’s original generative prior. As shown in \cref{appx:fig_compare2}, Qwen-Image-Lightning exihibits unstable performance in our test cases (especially in the first column), proving our assumption that it sacrifices the overall distribution correctness for higher semantic alignment. In contrast, ArcFlow directly distills the teacher’s non-linear velocity field and preserves its trajectory non-linearity through momentum-based parameterization. This design allows ArcFlow to rapidly converge to a high-precision teacher alignment with minimal training cost, thereby maintaining both strong prompt-following ability and high distributional fidelity in the few-step regime.

\section{Additional Technical Details}

\subsection{Mixed trajectory Integration}
\label{appx:mixed_trajectory}
This section provides the implementation details of the mixed latent integration strategy described in the main paper.

\noindent\textbf{Teacher Integration.}
For the teacher phase within each sub-interval $[t_i, t_{\mathrm{mix}}]$, we use the instantaneous velocity prediction
$\mathbf{u}(\mathbf{x}_{t_i}, t_i)$ as a constant velocity.
Since each sub-interval is sufficiently small, this approximation significantly reduces computational cost without affecting training stability.

\noindent\textbf{Student Integration via Analytic Transition.}
The student velocity field $\mathbf{v}(\mathbf{x}_t, t; \Theta)$ is induced by the momentum parameters $\Theta$ predicted by ArcFlow at $t_0$.
As ArcFlow defines a continuous velocity field, we apply the analytic transition operator $\Phi$ derived in Eq.~\eqref{eq:analytic_transition_main} to compute the integration from $t_{\mathrm{mix}}$ to $t_{i+1}$:
\begin{equation}
\begin{aligned}
\int_{t_{i+1}}^{t_{\mathrm{mix}}} \mathbf{v}(\mathbf{x}_t,t; \Theta)\,dt
&= \int_{t_{i+1}}^{t_0} \mathbf{v}(\mathbf{x}_t,t; \Theta)\,dt
 - \int_{t_{\mathrm{mix}}}^{t_0} \mathbf{v}(\mathbf{x}_t,t; \Theta)\,dt \\
&= \Phi(\mathbf{x}_{t_0}, t_0, t_{i+1}; \Theta)
 - \Phi(\mathbf{x}_{t_0}, t_0, t_{\mathrm{mix}}; \Theta).
\end{aligned}
\label{eq:student_analytic_step}
\end{equation}

This formulation allows efficient and exact integration of the student dynamics over arbitrary time intervals without numerical solvers.

\subsection{Momentum Factor Related Setting.}

\noindent\textbf{Log-parameterization of the Momentum Factor $\gamma$.}
According to the notion of momentum, the momentum factor $\gamma$ is required to be strictly positive.
However, directly regressing $\gamma$ introduces an implicit positivity constraint, which may lead to optimization difficulties and numerical instability during training, especially when the predicted values approach zero.
To address this, we parameterize the momentum factor in the logarithmic space.
Specifically, in training, the momentum factor projection head is designed to predict $\log \gamma$ instead of $\gamma$, and the actual momentum factor is recovered by exponentiation.
This reparameterization naturally enforces the positivity constraint while providing smoother gradients and more stable optimization behavior in practice.

\noindent\textbf{Projection Layer Initialization.}
To ensure that the model captures a diverse range of trajectory dynamics across timesteps, we initialize the momentum factors $\{\gamma_k\}_{k=1}^K$ as a geometric progression spanning the interval $[0.4, 5.0]$.
This range allows the mixture to cover both decelerating regimes ($\gamma < 1$) and accelerating regimes ($\gamma > 1$), enabling flexible modeling of complex flow trajectories.
In implementation, we first construct the geometric sequence in the $\gamma$ space and then convert it to the logarithmic domain.
Since the momentum factor projection layer is a linear layer, we initialize its weight matrix to zeros and assign the corresponding $\log \gamma_k$ values to the bias vector.
As a result, at the beginning of training, the predicted momentum factors exactly match the predefined geometric progression, providing a stable and interpretable initialization.

Crucially, we explicitly constrain one specific mode to be fixed at $\gamma = 1$.
This design introduces a linear inductive bias, serving as a stable anchor that allows the model to naturally fall back to standard linear flow matching when the velocity evolution is negligible.

\noindent\textbf{Learning Rate for the Momentum Factor Projection Layer.}
Since the momentum factor projection layer predicts $\log \gamma$ rather than $\gamma$, as shown in Eq.~\eqref{eq:mixture_expectation}, updates to this layer lead to exponential changes in the effective velocity field.
Consequently, using the same learning rate as other network components may cause unstable updates.

To improve numerical stability during training, we apply a reduced learning rate to this projection layer, specifically setting it to $0.1\times$ that of all other trainable layers.
This targeted adjustment effectively stabilizes optimization while preserving sufficient learning capacity for adapting the momentum factors.

\section{Training Details and Hyperparameters setting.}
\label{appx:training_detail}

\begin{table}[t!]
\caption{Detailed training configurations for distillation on Qwen-Image-20B and FLUX.1-dev. }
\vspace{-0.1in}
\begin{center}
\renewcommand\arraystretch{1.1}
\scalebox{0.85}{
\begin{tabular}{lcccc}
\toprule
Configuration & ArcFlow-Qwen & ArcFlow-FLUX \\\midrule
\textbf{Method-specific Settings} & & \\
Num of momentum modes $K$ & 16 & 16 \\
$\gamma$ initialization range &[0.5, 4.0] &[0.5, 4.0] \\
Num of intermediate timesteps &4 &4 \\
Trained NFEs &2 &2 \\
Mixed Trajectory Guidance Steps &1000 &2000 \\
\midrule
\textbf{Training Details} & & \\
Batch Size &384 &384 \\
Total Training Steps &7500 &8000 \\
\midrule
\textbf{Optimizer Settings} & & \\
Optimizer &AdamW & AdamW \\
Learning rate &$1e^{-4}$ &$1e^{-4}$ \\
Learning rate for $\gamma$ &$1e^{-5}$ &$1e^{-5}$ \\
Weight Decay &0 &0 \\
$(\beta_1,\beta_2)$ &(0.9, 0.95) &(0.9, 0.95) \\

\bottomrule
\end{tabular}
}
\end{center}
\label{table:training_configuration}
\end{table}

We freeze the teacher backbone and only train LoRA adapters together with extended output heads for predicting velocities, momentum factors, and gating probabilities.

For Qwen-Image-20B, we insert rank-256 LoRA adapters into a small subset of modules, including the image MLP, timestep embedding layers, and the text MLP blocks of the transformer. Specifically, LoRA is applied to the projection layers of the image MLP, both linear layers of the timestep embedder, and the text MLPs across all transformer blocks, while all remaining parameters are kept frozen.

For FLUX.1-dev, we apply rank-256 LoRA adapters to the projection and feed-forward modules that dominate the model’s conditional and feature transformation capacity. 
Specifically, LoRA is inserted into the MLP projection layers, the output projection head, the feed-forward networks in both the main and context branches, as well as the timestep embedding layers. 
All other parameters of the teacher backbone are kept frozen.

We conduct our experiment on 96 H100 GPUs. All models are trained with BF16 mixed precision. We detail our other training configurations in \cref{table:training_configuration}.

\section{Theoretical Analysis}
\subsection{Implementation and Derivation of the Analytic Transition Operator $\Phi$}
\label{appx:analytic}

This supplement provides the step-by-step derivation of Eq.~\eqref{eq:analytic_transition_main} and the limiting behavior $\gamma\to1$.

\noindent\textbf{Expansion of the velocity mixture.}
Assume the ArcFlow velocity at base state $\mathbf{x}_{t_s}$ is expressed as
\[
\mathbf{v}_\theta(\mathbf{x}_{t_s}, t)
= \sum_{k=1}^K \pi_k(\mathbf{x}_{t_s})\; \mathbf{v}_k(\mathbf{x}_{t_s})\; \gamma_k(\mathbf{x}_{t_s})^{\,1-t},
\]
where all mode-dependent quantities $\pi_k,\mathbf{v}_k,\gamma_k$ are evaluated at $\mathbf{x}_{t_s}$ and considered constant w.r.t. the integration variable $t$.

Then
\begin{align}
\Phi(\mathbf{x}_{t_s}, t_s, t_e;\theta)
&= \int_{t_e}^{t_s} \mathbf{v}_\theta(\mathbf{x}_{t_s}, t)\,dt \nonumber\\
&= \sum_{k=1}^K \pi_k(\mathbf{x}_{t_s})\,\mathbf{v}_k(\mathbf{x}_{t_s})
\int_{t_e}^{t_s} \gamma_k(\mathbf{x}_{t_s})^{\,1-t}\,dt,
\label{eq:supp_expand}
\end{align}

Define the scalar integral
\[
I(\gamma; t_s, t_e) \triangleq \int_{t_e}^{t_s} \gamma^{\,1-t}\,dt.
\]
For $\gamma\neq 1$ we compute
\begin{align}
I(\gamma; t_s, t_e)
&= \int_{t_e}^{t_s} e^{(1-t)\ln\gamma}\,dt
= \int_{t_e}^{t_s} e^{\ln\gamma}\,e^{-t\ln\gamma}\,dt \nonumber\\
&= \gamma \int_{t_e}^{t_s} e^{-t\ln\gamma}\,dt
= \gamma \cdot \frac{e^{-t\ln\gamma}}{-\ln\gamma}\Big|_{t_e}^{t_s} \nonumber\\
&= \frac{\gamma^{1-t_e}-\gamma^{1-t_s}}{\ln\gamma},
\label{eq:supp_I_non1}
\end{align}
This yields Eq.~\eqref{eq:momentum_coeff} for $\gamma\neq1$.

\noindent\textbf{Singular case $\gamma=1$ and continuity.}
When $\gamma=1$ the integrand equals $1$, hence $I(1;t_s,t_e)=t_s-t_e$.
We also show the limit $\lim_{\gamma\to1} I(\gamma;t_s,t_e)=t_s-t_e$ to prove continuity.
Set $\gamma=e^{h}$ with $h\to 0$. Then
\[
I(e^{h};t_s,t_e)=\frac{e^{h(1-t_e)}-e^{h(1-t_s)}}{h},
\]
As $h\to0$, apply Taylor expansion (or equivalently L'Hôpital's rule via $h$):
\[
\lim_{h\to0}\frac{e^{h(1-t_e)}-e^{h(1-t_s)}}{h}
= (1-t_e)-(1-t_s)=t_s-t_e,
\]
Thus the coefficient is continuous at $\gamma=1$, and the analytic expression recovers the linear dynamic mode.

\noindent\textbf{Full analytic operator.}
Combining Eq.\eqref{eq:supp_expand} and Eq.\eqref{eq:supp_I_non1} yields
\[
\Phi(\mathbf{x}_{t_s}, t_s, t_e;\theta)
= \sum_{k=1}^K \pi_k(\mathbf{x}_{t_s})\,\mathbf{v}_k(\mathbf{x}_{t_s})\,
\mathcal{C}(\gamma_k(\mathbf{x}_{t_s}), t_s, t_e),
\]
with $\mathcal{C}(\cdot)$ as in Eq.~\eqref{eq:momentum_coeff}.

\noindent\textbf{Numerical remarks.}
For numerical stability when $\gamma$ is very close to $1$, we branch to the second case ($t_s-t_e$) when $|\ln\gamma|<\epsilon$ ($\epsilon=10^{-6}$).

\subsection{Proof of Theorem \ref{thm:velocity_alignment}}
\label{proof:velocity_alignment}

In this section, we provide the proof for Theorem \ref{thm:velocity_alignment}. We demonstrate that the momentum parameterization in ArcFlow can accurately approximate any ground truth trajectory at $N$ discrete timesteps by using only $K=N$ momentum modes. 

\subsubsection{Problem Reformulation}

Let $\{t_1, \dots, t_N\}$ denote the set of sampled distinct timesteps, and let $\mathbf{u}^*_n = \mathbf{u}^*(\mathbf{y}, t_n) \in \mathbb{R}^D$ represent the corresponding ground truth velocities for any latent state $\mathbf{y}$ sampled from the data manifold.
Our objective is to demonstrate that there exists a parameter set $\theta = \{\pi_k, \mathbf{v}_k, \gamma_k\}_{k=1}^K$ that exactly satisfies the conditions:
\begin{equation}
    \sum_{k=1}^K \pi_k \mathbf{v}_k \gamma_k^{1-t_n} = \mathbf{u}^*_n, \quad \forall n \in \{1, \dots, N\},
    \label{eq:interpolation_target}
\end{equation}
where $K=N$. Directly solving Eq.~\eqref{eq:interpolation_target} is complicated by the bilinear coupling between $\pi_k$ and $\mathbf{v}_k$. Therefore, we introduce the composite parameter $\mathbf{w}_k \triangleq \pi_k \mathbf{v}_k$. The independence of the $D$ dimensions allows us to decouple the problem into $D$ identical scalar equations. Thus, we focus on a single scalar dimension, letting $w_k$ and $u^*_n$ denote the scalar components of $\mathbf{w}_k$ and $\mathbf{u}^*_n$, respectively.

Since our goal is to establish the existence of at least one feasible parameter set $\theta$, we may fix a subset of the parameters without loss of generality.
Specifically, we fix the momentum factors $\Gamma = \{\gamma_1, \dots, \gamma_N\}$ to be arbitrary distinct positive real values. With $\Gamma$ fixed, the exponential terms become known constants, and the problem of finding $\{\pi_k, \mathbf{v}_k\}$ reduces to solving for the composite weights $w_k$. This reduction holds because any valid solution for $w_k$ guarantees the existence of $\pi_k$ and $\mathbf{v}_k$.
Consequently, the problem can be formulated as the following linear system:
\begin{equation}
    \mathbf{M} \mathbf{c} = \mathbf{b},
    \label{eq:linear_system}
\end{equation}
where $\mathbf{c} = [w_1, \dots, w_K]^\top \in \mathbb{R}^K$ and $\mathbf{b} = [u^*_1, \dots, u^*_N]^\top \in \mathbb{R}^N$. The matrix $\mathbf{M} \in \mathbb{R}^{N \times K}$ is the basis matrix determined by the fixed $\gamma_k$:
\begin{equation}
    M_{nk} = \gamma_k^{1-t_n},
\end{equation}

Establishing the existence of $\theta$ for an arbitrary ground truth $\mathbf{b}$ is equivalent to guaranteeing that the linear system $\mathbf{M}\mathbf{c}=\mathbf{b}$ is solvable for any $\mathbf{b}$. This condition holds if and only if the basis matrix $\mathbf{M}$ is non-singular (invertible). Thus, the proof reduces to demonstrating the invertibility of $\mathbf{M}$.

\subsubsection{Proving Invertibility via Chebyshev Systems}

The solvability of the linear system relies on the non-singularity of the basis matrix $\mathbf{M}$. To establish this, we frame the problem within the theory of Chebyshev Systems. 

\begin{definition}[Chebyshev System]
\label{def:chebyshev}
Let $\{f_1, \dots, f_N\}$ be a set of continuous functions defined on an interval $\mathcal{I}$. This set constitutes a Chebyshev System if every non-trivial linear combination $F(t) = \sum_{k=1}^N c_k f_k(t)$ (where coefficients $c_k \in \mathbb{R}$ are not all simultaneously zero) possesses at most $N-1$ distinct zeros in $\mathcal{I}$.
\end{definition}

The significance of this definition lies in the Haar Condition, which directly links the zero-counting property of functions to the determinant of their basis matrix:

\begin{lemma}[Haar Condition \cite{cheney1966introduction}]
\label{lemma:haar}
If the set $\{f_1, \dots, f_N\}$ forms a Chebyshev System on $\mathcal{I}$, then for any set of distinct sampling points $\{t_1, \dots, t_N\} \subset \mathcal{I}$, the resulting matrix $\mathbf{\Phi}$ with entries $\Phi_{nk} = f_k(t_n)$ is non-singular.
\end{lemma}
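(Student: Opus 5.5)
The plan is a direct proof by contradiction, using only Definition~\ref{def:chebyshev}. Fix a Chebyshev system $\{f_1,\dots,f_N\}$ on $\mathcal{I}$ and distinct points $t_1,\dots,t_N\in\mathcal{I}$, and form the $N\times N$ matrix $\mathbf{\Phi}$ with $\Phi_{nk}=f_k(t_n)$. Because $\mathbf{\Phi}$ is square, non-singularity is equivalent to $\mathbf{\Phi}$ having trivial kernel. So I will assume $\mathbf{\Phi}$ is singular and derive a contradiction.

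The key step is a reinterpretation of the kernel. If $\mathbf{\Phi}$ is singular, there is a coefficient vector $\mathbf{c}=(c_1,\dots,c_N)^\top\neq\mathbf{0}$ with $\mathbf{\Phi}\mathbf{c}=\mathbf{0}$. Reading this row by row gives, for each $n$,
\begin{equation*}
0=(\mathbf{\Phi}\mathbf{c})_n=\sum_{k=1}^N c_k f_k(t_n)=F(t_n), \qquad F(t)\triangleq\sum_{k=1}^N c_k f_k(t).
\end{equation*}
Thus the matrix identity $\mathbf{\Phi}\mathbf{c}=\mathbf{0}$ says precisely that the linear combination $F$ vanishes at every sampling point.

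Next I check that $F$ is a non-trivial combination in the sense of Definition~\ref{def:chebyshev}. This holds because the coefficients $c_k$ are not all zero; it is a property of the coefficient vector, not of $F$ as a function. The points $t_1,\dots,t_N$ are distinct and lie in $\mathcal{I}$, so $F$ has at least $N$ distinct zeros in $\mathcal{I}$. The definition allows at most $N-1$, which is a contradiction. Hence $\ker\mathbf{\Phi}=\{\mathbf{0}\}$ and $\mathbf{\Phi}$ is non-singular.

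There is no real obstacle here. The only point needing care is that "non-trivial" in Definition~\ref{def:chebyshev} refers to the coefficient vector, which is exactly what a nonzero kernel vector supplies. We never need to argue that $F$ is not identically zero as a function. Continuity of the $f_k$ and the fact that $\mathcal{I}$ is an interval are not used in this direction; they matter only for the converse and for verifying the Chebyshev property for concrete families such as $\{\gamma_k^{1-t}\}$.
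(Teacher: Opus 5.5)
Your proof is correct and complete. A nonzero kernel vector of $\mathbf{\Phi}$ is exactly a non-trivial coefficient vector whose combination vanishes at the $N$ distinct nodes, and the Chebyshev property forbids that. The paper gives no proof of its own and simply cites Cheney, where this is the standard argument.

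One small correction to your closing aside. The converse is equally elementary: a non-trivial combination with $N$ distinct zeros immediately gives a nonzero kernel vector at those points. So continuity and the interval structure of $\mathcal{I}$ are not needed there either. They matter instead for finer facts, such as the determinant keeping a constant sign over ordered nodes $t_1<\dots<t_N$.
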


To apply Lemma \ref{lemma:haar} to our specific problem, we must demonstrate that our proposed momentum dynamics functions satisfy the definition of a Chebyshev System.

\begin{proposition}
\label{prop:momentum_tsystem}
The set of functions $\{\gamma_k^{1-t}\}_{k=1}^N$, parameterized by distinct momentum factors $\gamma_k \in \mathbb{R}^+$, forms a Chebyshev System on $\mathbb{R}$.
\end{proposition}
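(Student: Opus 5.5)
The plan is to reduce the statement to the classical fact that a nontrivial exponential sum $\sum_{k=1}^N c_k e^{\lambda_k s}$ with distinct real exponents $\lambda_k$ has at most $N-1$ real zeros. That fact is proved by induction on $N$, combining Rolle's theorem with the operation of multiplying by an exponential.

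First, rewrite each basis function as $\gamma_k^{1-t} = e^{(1-t)\ln\gamma_k}$. Set $\lambda_k \triangleq \ln\gamma_k$ and change variables to $s \triangleq 1-t$. Because the $\gamma_k$ are distinct positive reals and $\ln$ is strictly increasing on $\mathbb{R}^+$, the $\lambda_k$ are distinct reals. The map $t\mapsto 1-t$ is a bijection of $\mathbb{R}$, so distinct zeros in $t$ correspond one-to-one with distinct zeros in $s$. It therefore suffices to show that every nontrivial $F(s)=\sum_{k=1}^N c_k e^{\lambda_k s}$ has at most $N-1$ distinct real zeros. Each function is continuous, so the setting of Definition~\ref{def:chebyshev} is satisfied.

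Next comes the induction on $N$.

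\emph{Base case, $N=1$.} Here $F(s)=c_1e^{\lambda_1 s}$ with $c_1\neq 0$. This never vanishes, so it has $0 = N-1$ zeros.

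\emph{Inductive step.} Suppose the claim holds for $N-1$, and suppose for contradiction that a nontrivial $F$ with $N$ terms has $N$ distinct zeros $s_1<\dots<s_N$. We may drop all terms with $c_k=0$. If fewer than $N$ terms remain, the inductive hypothesis (applied to the smaller count, or trivially) already gives fewer zeros, a contradiction. So assume every $c_k\neq 0$. Define $G(s)=e^{-\lambda_N s}F(s)=\sum_{k=1}^{N}c_k e^{(\lambda_k-\lambda_N)s}$. Since the exponential factor is positive, $G$ has exactly the same zeros as $F$. By Rolle's theorem, $G'$ has at least $N-1$ distinct zeros, one in each open interval $(s_i,s_{i+1})$. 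But $G'(s)=\sum_{k=1}^{N-1}c_k(\lambda_k-\lambda_N)e^{(\lambda_k-\lambda_N)s}$, because the $k=N$ term is constant and differentiates to zero. This is an exponential sum with $N-1$ distinct exponents $\lambda_k-\lambda_N$. Its coefficients $c_k(\lambda_k-\lambda_N)$ are nonzero, since $c_k\neq0$ and $\lambda_k\neq\lambda_N$, so it is nontrivial. By the inductive hypothesis it has at most $N-2$ zeros, which contradicts the $N-1$ zeros found above.

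The main obstacle is the nontriviality of $G'$: the induction hypothesis only applies to a nontrivial combination. This is exactly where both distinctness of the $\gamma_k$ (ensuring $\lambda_k-\lambda_N\ne0$) and the reduction to all-nonzero coefficients are used, so I would state that step carefully.

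Finally, combining the result with Lemma~\ref{lemma:haar} shows that $\mathbf{M}$ is invertible for any distinct $t_n$. An alternative I would keep in reserve is to note that, at integer-spaced nodes, $\mathbf{M}$ is a generalized Vandermonde matrix. That route is less general, though, so the Rolle-based argument is preferred.
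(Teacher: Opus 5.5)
Your proof is correct and takes essentially the same route as the paper: rewrite the basis as an exponential sum with distinct exponents, multiply by one exponential so that one term becomes constant, and induct on $N$ using Rolle's theorem. Your explicit reduction to all-nonzero coefficients is more careful than the paper, which applies the induction hypothesis to $G_N'$ without checking that it is a nontrivial combination (it vanishes identically when only $\alpha_1 \neq 0$).
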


\begin{proof}
Let $F(t)$ be an arbitrary non-trivial linear combination of the basis functions with coefficients $c_k \in \mathbb{R}$. We can rewrite the expression as a generalized polynomial of exponentials:
\[
F_N(t) = \sum_{k=1}^N c_k \gamma_k^{1-t} 
     = \sum_{k=1}^N (c_k \gamma_k) e^{-(\ln \gamma_k) t} 
     = \sum_{k=1}^N \alpha_k e^{\lambda_k t},
\]
Here, we define the new coefficients $\alpha_k \triangleq c_k \gamma_k$ (which remain non-zero if $c_k$ are non-zero) and the distinct exponents $\lambda_k \triangleq -\ln \gamma_k$. We prove that $F_N(t)$ has at most $N-1$ zeros by induction on $N$.

\textbf{Base case ($N=1$):} $F_1(t) = \alpha_1 e^{\lambda_1 t}$. Since exponentials are strictly positive and $\alpha_1 \neq 0$, $F_1(t)$ has no zeros.

\textbf{Inductive step:} Assume that any linear combination of $N-1$ exponentials $F_{N-1}(t)$ has at most $N-2$ distinct zeros. Suppose, for contradiction, that $F_N(t)$ has $N$ distinct zeros. Define the auxiliary function $G_N(t) = e^{-\lambda_1 t} F_N(t)$, which shares the same zeros as $F_N(t)$. Its derivative is:
\[
G_N'(t) = \frac{d}{dt} \left( \alpha_1 + \sum_{k=2}^N \alpha_k e^{(\lambda_k-\lambda_1)t} \right) = \sum_{k=2}^N \alpha_k (\lambda_k-\lambda_1) e^{(\lambda_k-\lambda_1)t}.
\]
Note that $G_N'(t)$ is a linear combination of $N-1$ exponentials with distinct exponents $\lambda_k - \lambda_1$. By the induction hypothesis, $G_N'(t)$ can have at most $N-2$ zeros. However, by Rolle's Theorem, if $G_N(t)$ has $N$ distinct zeros, its derivative $G_N'(t)$ must have at least $N-1$ distinct zeros. This contradiction implies that $F_N(t)$ cannot have $N$ distinct zeros.
\end{proof}
\textbf{Conclusion.}
Since Proposition \ref{prop:momentum_tsystem} confirms that our basis functions form a Chebyshev System, Lemma \ref{lemma:haar} ensures that the matrix $\mathbf{M}$ is invertible for any set of distinct timesteps. This guarantees the existence of a solution vector $\mathbf{c}$. 
Translating this mathematical result back to our original objective, the solvability of $\mathbf{c}$ implies that for any ground truth velocities $\mathbf{u}^*_n$, we can explicitly construct a parameter set $\theta$ (e.g., by setting $\pi_k=1$ and $\mathbf{v}_k$ from $\mathbf{c}$) that satisfies Eq.~\eqref{eq:interpolation_target} exactly. This completes the proof of Theorem \ref{thm:velocity_alignment}, theoretically validating that the proposed momentum parameterization possesses sufficient expressivity to perfectly align with arbitrary trajectory dynamics on the data manifold.

\section{More Experiment Results}
\subsection{Ablations on Mixed Trajectory Integration}
\label{appx:ablation_mixed_trajectory}

\begin{figure}[t]
    \centering
    \includegraphics[width=0.8\linewidth]{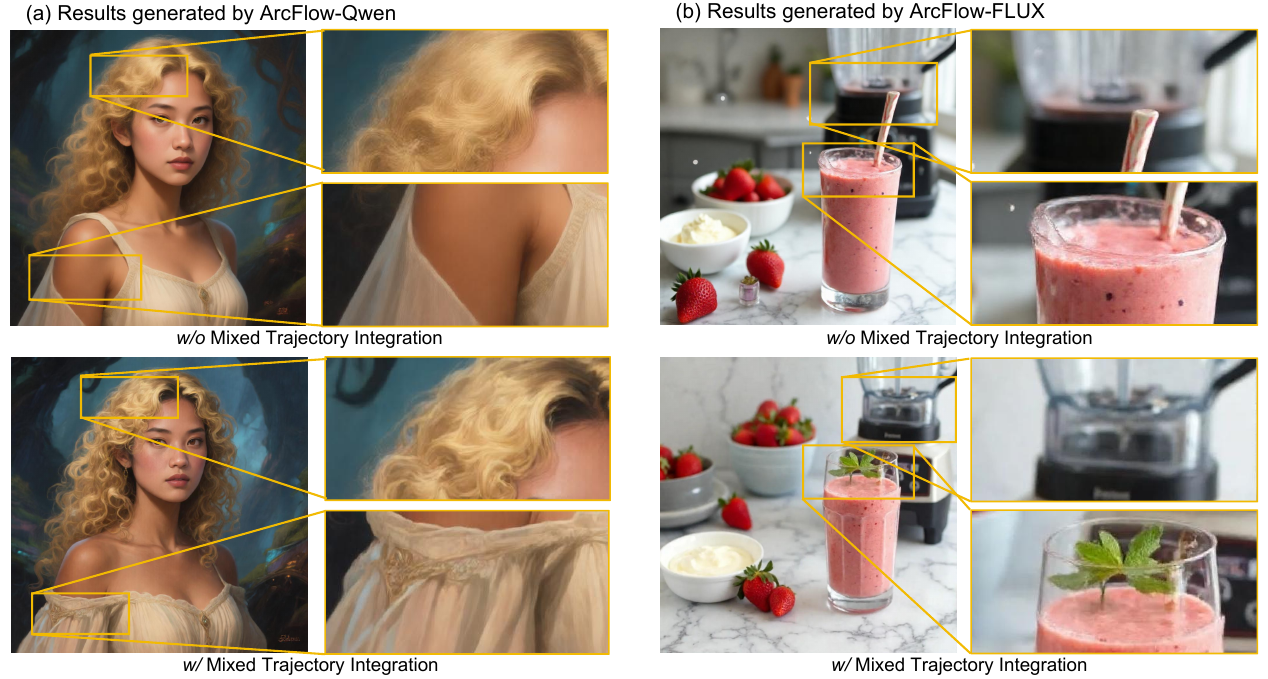}
    \vspace{-0.1cm}
    \caption{Visualization of the ablation effect on the adoption of mixed trajectory integration for training. Here, (a) is generated from distilling Qwen-Image-20B, and (b) is generated from distilling FLUX.1-dev. }
    \label{appx:fig_ablation_mixed}
    \vspace{-0.2cm}
\end{figure}

\begin{table}\small
\centering
\caption{Quantitative ablation results on the adoption of mixed trajectory integration for training. }
\label{tab:ablation_mixed_trajectory}
\begin{tabular}{lc}
\toprule
Method & FID$\downarrow$ \\
\midrule
\textbf{ArcFlow-Qwen} & \\
\textit{w/o} Mixed Trajectory Integration & 14.04 \\
\textit{w/} Mixed Trajectory Integration & 13.52 \\
\midrule
\textbf{ArcFlow-FLUX} & \\
\textit{w/o} Mixed Trajectory Integration & 19.17 \\
\textit{w/} Mixed Trajectory Integration & 18.21 \\\bottomrule
\end{tabular}
\end{table}

To further validate the effectiveness of the proposed mixed trajectory integration strategy during training, we conduct ablation studies by training ArcFlow models with and without mixed trajectory integration on both Qwen-Image-20B and FLUX.1-dev. In all settings, models are trained for 3,000 iterations with a batch size of 16, and evaluated using teacher-alignment FID on the Align5000 dataset. As shown in \cref{tab:ablation_mixed_trajectory}, adopting mixed trajectory integration consistently improves FID across both backbones, indicating more accurate alignment with the teacher distribution. We further provide qualitative comparisons in \cref{appx:fig_ablation_mixed}. Models trained with mixed trajectory integration produce images with richer local details and sharper structures, benefiting from learning the velocity field while staying closer to the teacher trajectory in early training. In contrast, models trained without this strategy, while preserving comparable global structure, exhibit smoother and less detailed results, suggesting that the student is more prone to learning inaccurate velocity estimates at early stages, which leads to slower and less stable convergence.

\subsection{Convergence Visualization}
\label{appx:convergence_visualization}
\begin{figure}[t]
    \centering
    \includegraphics[width=0.8\linewidth]{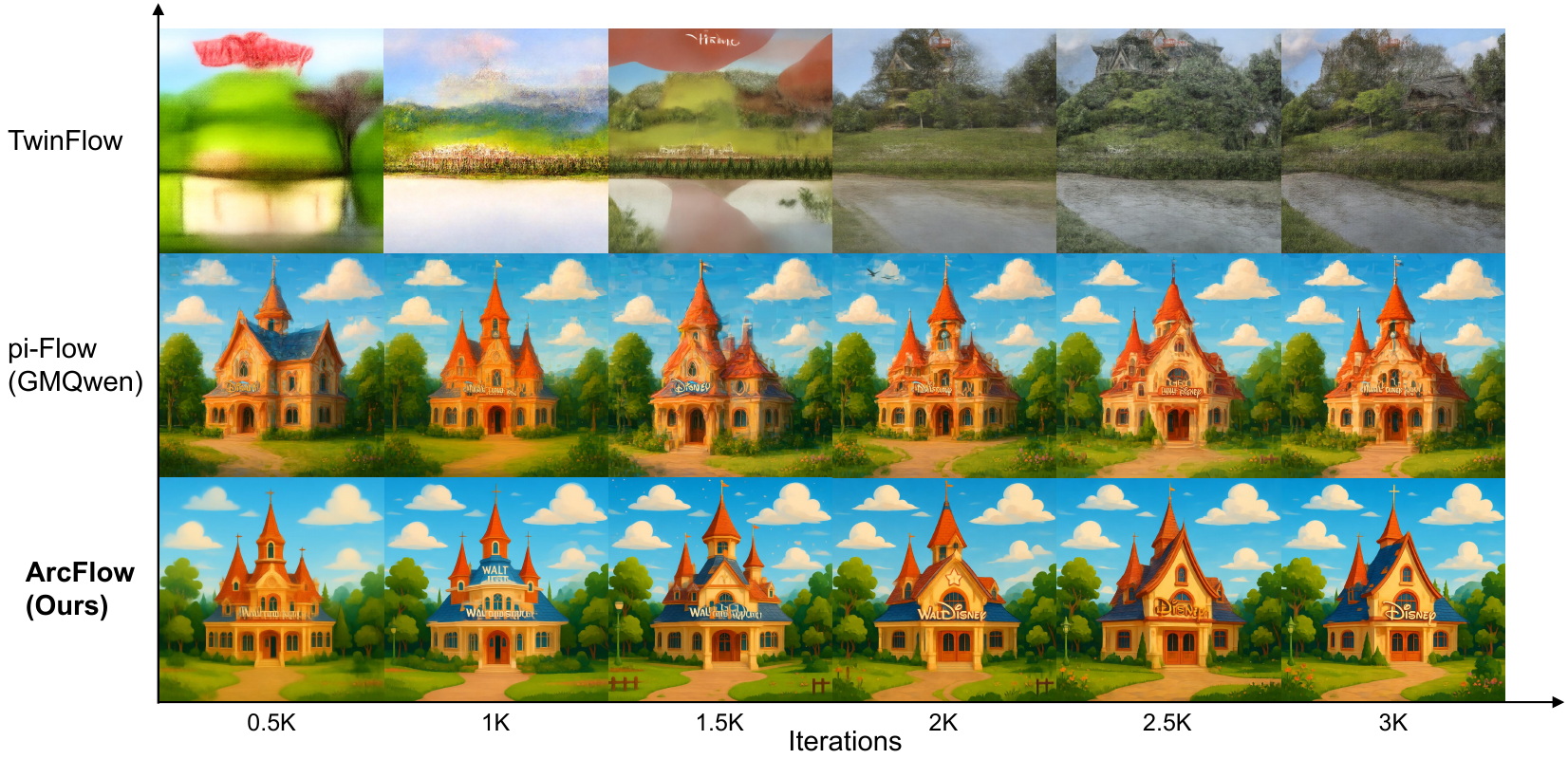}
    \vspace{-0.1cm}
    \caption{Convergence visualization across different student methods based on Qwen-Image-20B. }
    \label{appx:fig_convergence_visualization}
    \vspace{-0.2cm}
\end{figure}

To further validate ArcFlow's superior convergence speed and stability, we conducted a visualization experiments. We reuse the same training checkpoint as in the convergence analysis of \cref{subsec:comparison_study} to ensure a fair comparison. \cref{appx:fig_convergence_visualization} shows our visualization result generated from one single prompt. 

As shown in \cref{appx:fig_convergence_visualization}, ArcFlow already exhibits a coherent global structure after only 0.5K training iterations, with most stochastic artifacts and irregular noise largely suppressed. At this early stage, the generated images mainly suffer from mild over-smoothing, rather than structural corruption, indicating that the model has already entered a meaningful generative regime. This behavior suggests that ArcFlow can immediately benefit from its natural compatibility with the pre-trained teacher weights, enabling effective adaptation without requiring extensive retraining.
As training proceeds, the visual quality of ArcFlow improves in a stable and monotonic manner. By 3K iterations, the generated images reach a level where no obvious visual defects can be identified by human inspection, demonstrating both fast convergence and strong training stability.

In comparison, pi-Flow preserves reasonable global structure at early iterations; however, it consistently struggles with residual noise artifacts throughout the training process. Even with increased iterations, the presence of scattered noise prevents a clear improvement in perceptual quality, resulting in noticeably slower and less stable convergence.

Moreover, TwinFlow exhibits the weakest convergence behavior. As a linear distillation method, its parameterization conflicts with the teacher’s inherently complex trajectory, which prevents effective reuse of the teacher’s pre-trained weights at initialization. Consequently, TwinFlow is forced to re-learn a viable representation from a high-error state, leading to significantly slower convergence and inferior visual quality during early and intermediate training stages.

\subsection{Inference time}

\begin{table}\small
\centering
\caption{Average inference time of different models on the same prompt and $1024\times1024$ resolution, all generating with 2 NFEs.}
\label{tab:inference_time}
\begin{tabular}{lc|lc}
\toprule
\textbf{Qwen-Image Students} & Inference Time (s) & \textbf{FLUX Students} & Inference Time (s)\\
\midrule
Qwen-Image-Lightning & 1.718 & SenseFlow (FLUX) & 1.432 \\
TwinFlow & 1.372 & pi-Flow (GMFLUX) & 1.470\\
pi-Flow (GMQwen) & 1.440 &  \textbf{ArcFlow-FLUX (Ours)} &  1.466 \\
\textbf{ArcFlow-Qwen (Ours)} &  1.411 & &  \\
\bottomrule
\end{tabular}
\end{table}

To quantitatively validate our generation acceleration, we measure the inference time of our method and other few-step baselines by running each model five times on the same prompt and reporting the average.
Table~\ref{tab:inference_time} summarizes the inference time of different student models evaluated at a resolution of $1024\times1024$, with all methods generating images using 2 NFEs.
We observe that Qwen-Image-Lightning exhibits the longest inference time, which can be attributed to its use of multiple LoRA adapters, introducing additional low-rank computations during inference.
In contrast, TwinFlow and SenseFlow achieve the lowest inference time, as they are trained via full-parameter finetuning, where the adapted weights directly overwrite the original parameters and incur no additional computational overhead at inference time.
Our methods, ArcFlow-Qwen and ArcFlow-FLUX, fall between these two extremes. This indicates that the additional parameters introduced by our finetuning strategy incur only a negligible increase in floating-point operations, resulting in inference times comparable to fully finetuned baselines.
Overall, the results demonstrate that our approach achieves a favorable balance between generation quality and inference efficiency, without sacrificing the low-latency advantage crucial for few-step image generation.

\section{Limitations and Future Work}
\begin{figure}
    \centering
    \includegraphics[width=0.3\linewidth]{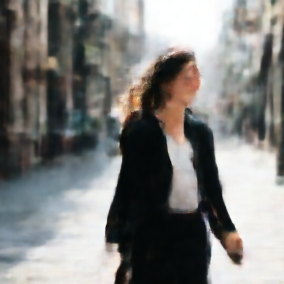}        
    \caption{One failure case of ArcFlow, as 1-NFE inference produces blurry results. }
    \label{fig:failure}
    \vspace{-0.5cm}
\end{figure}
\cref{fig:failure} illustrates a representative limitation of our method. Specifically, when forced to degenerate to the extreme setting of single-step inference (1 NFE), ArcFlow exhibits severe degradation in generation quality and fails to produce meaningful results. We attribute this limitation to the difficulty of accurately modeling the momentum factor $\gamma$ under a 1 NFE regime, where $\gamma$ becomes highly sensitive and challenging to predict without sufficient modeling capacity. A potential direction to address this issue is to design deeper or more expressive network layers dedicated to modeling $\gamma$. In addition, we plan to validate the effectiveness of our method across models with diverse parameter scales. This part is left as future work.

\section{Additional Qualitative Results}
\label{appx:qualitative_results}

\subsection{Comparison of Few-step Students on Qwen-Image-20B}
\begin{figure}[t]
    \centering
    \includegraphics[width=1.0\linewidth]{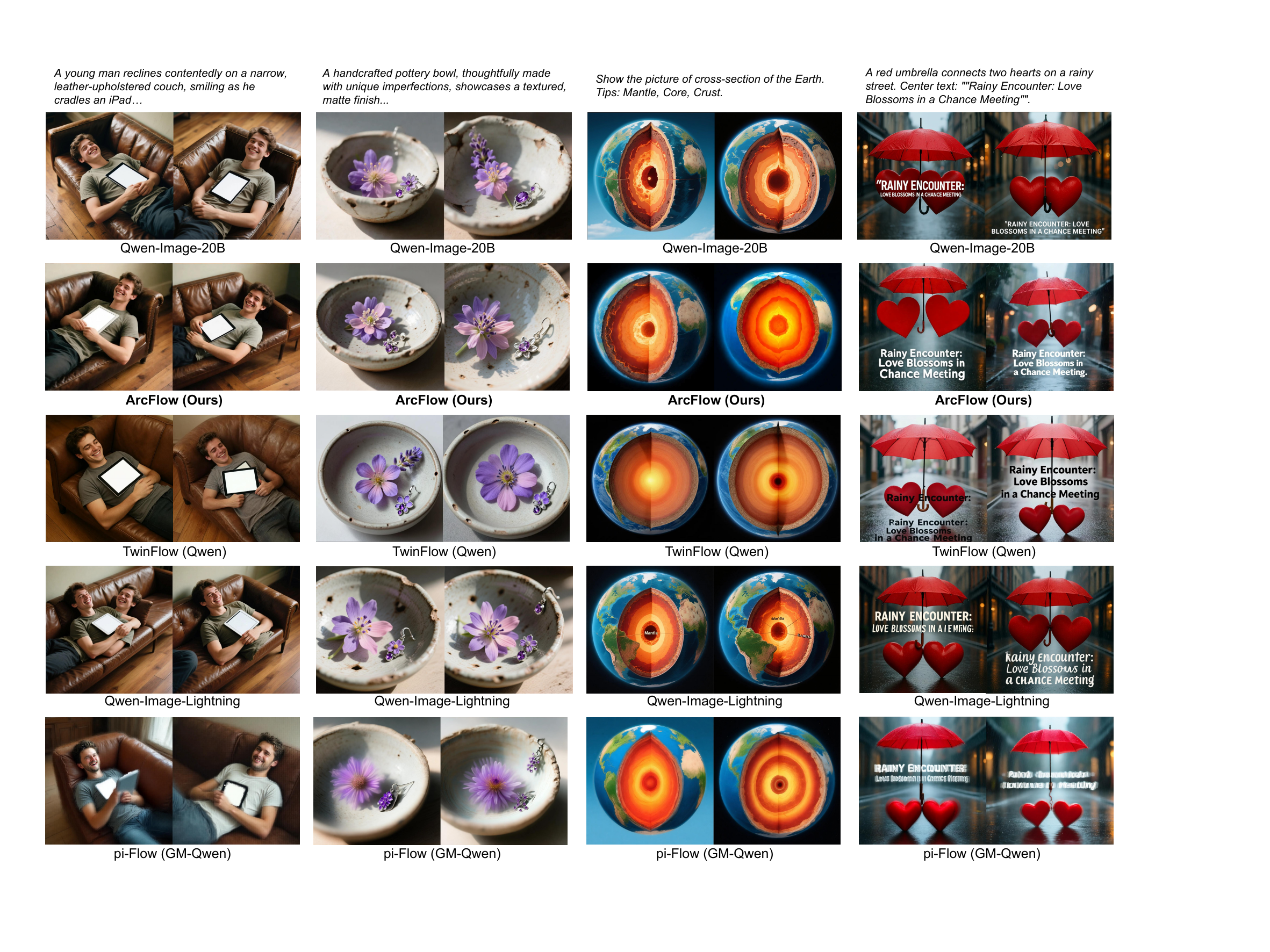}
    \caption{Additional qualitative comparisons between different student models distilled on Qwen-Image-20B. Note that results in each column are generated from the same batch of initial noise. }
    \label{appx:fig_compare2}
\end{figure}
We provide additional comparison results of student models that are based on Qwen-Image-20B in \cref{appx:fig_compare2}. 

\subsection{Comparison of Few-step Students on FLUX.1-dev}
\begin{figure}[t]
    \centering
    \includegraphics[width=0.9\linewidth]{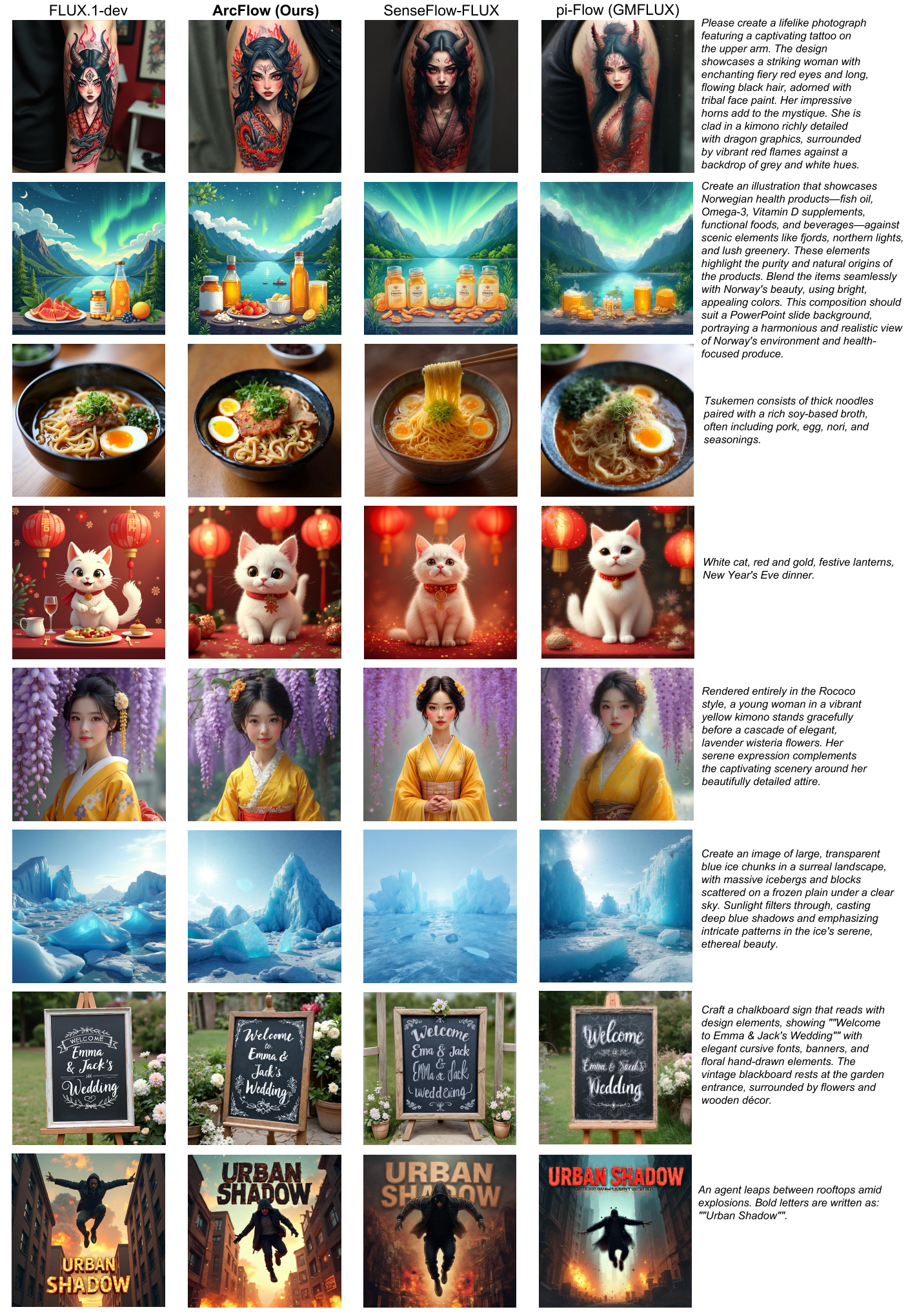}
    \caption{Qualitative comparisons between different student models distilled on FLUX.1-dev. }
    \label{appx:fig_compare_flux}
\end{figure}

We provide additional comparison results of student models that are based on FLUX.1-dev in \cref{appx:fig_compare_flux}. 

\subsection{More High-Resolution Visualizations}
In this section, we present additional qualitative examples generated by ArcFlow to further demonstrate its generative performance. All prompts are randomly sampled, and the results are shown directly without any manual selection or filtering. Visualization results are shown in \cref{appx:vis1}, \cref{appx:vis2}, \cref{appx:vis3}.

\begin{figure}[t]
    \centering
    \includegraphics[width=0.8\linewidth]{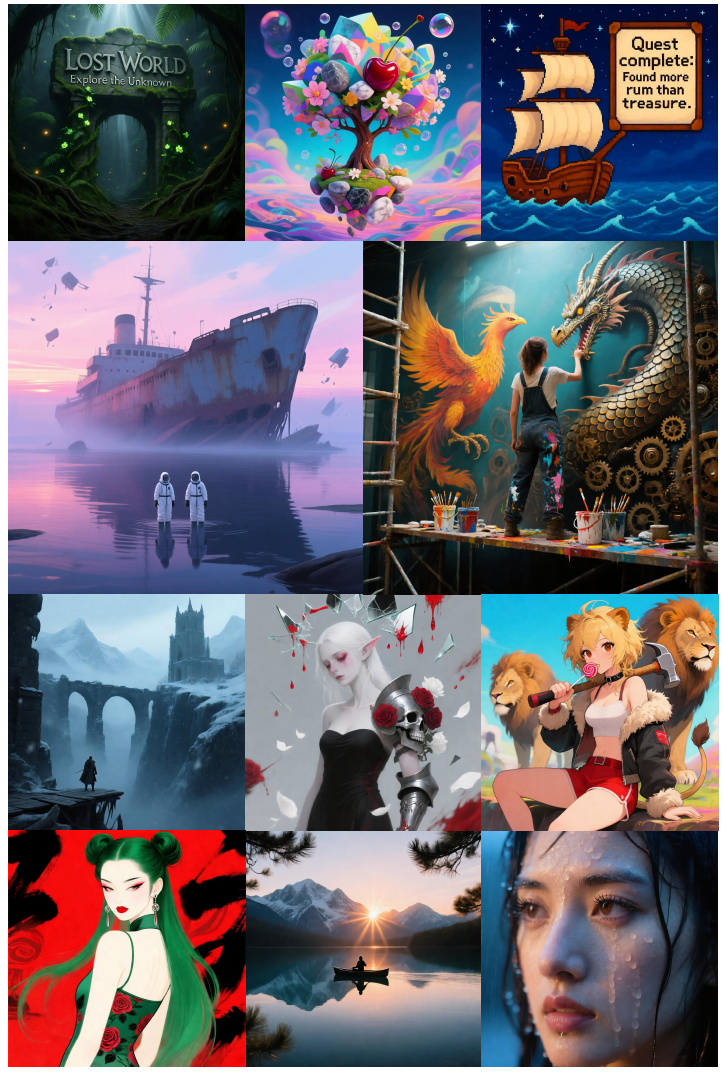}
    \caption{Visualization of ArcFlow-Qwen (NFE=2). Each image is of $1024\times1024$ resolution. }
    \label{appx:vis1}
\end{figure}

\begin{figure}[t]
    \centering
    \includegraphics[width=0.8\linewidth]{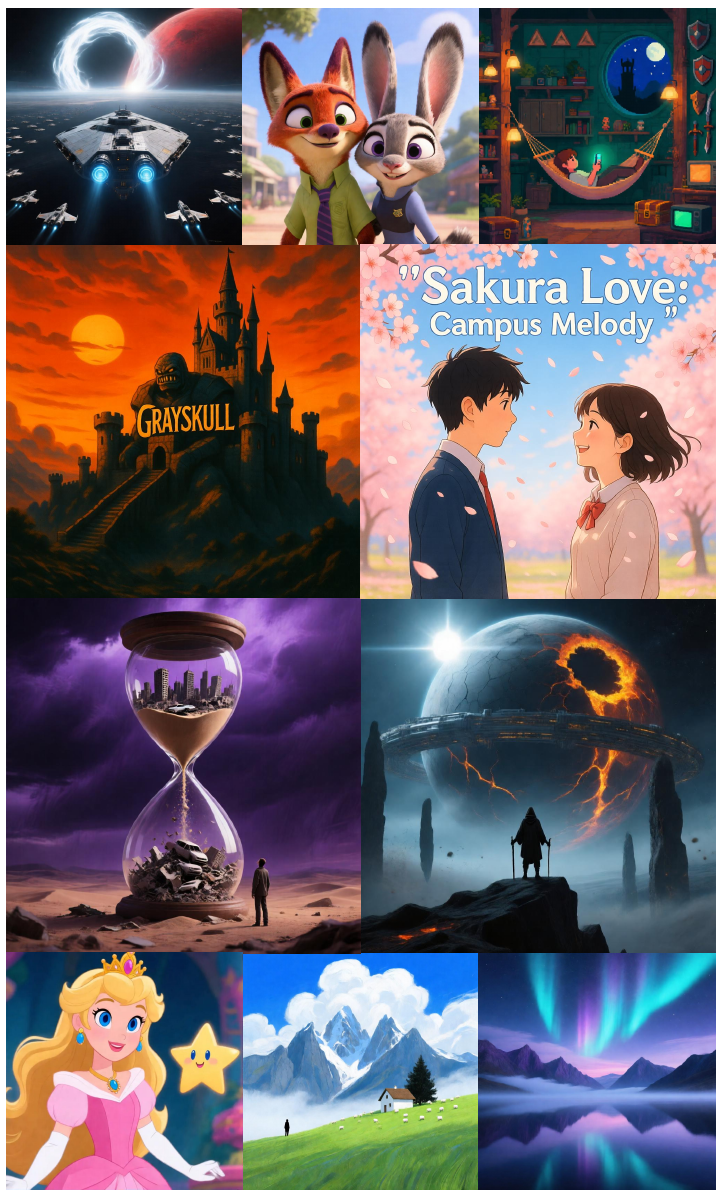}
    \caption{Visualization of ArcFlow-Qwen (NFE=2). Each image is of $1024\times1024$ resolution. }
    \label{appx:vis2}
\end{figure}

\begin{figure}[t]
    \centering
    \includegraphics[width=0.8\linewidth]{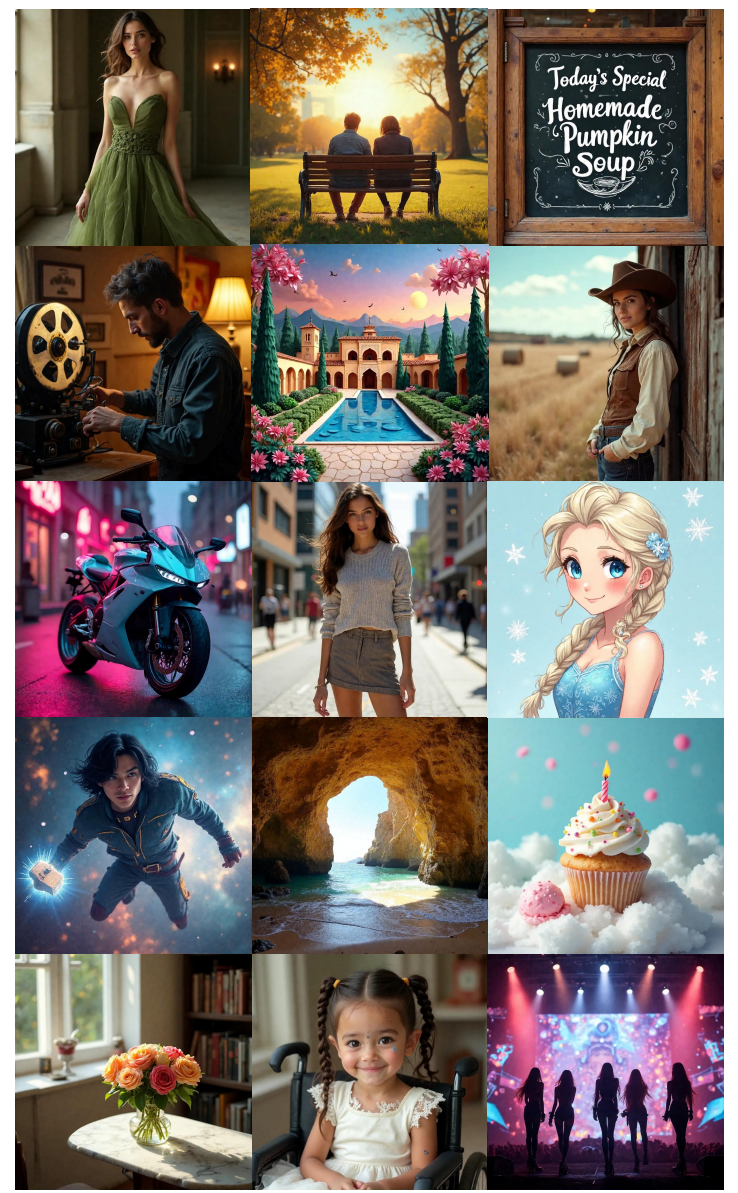}
    \caption{Visualization of ArcFlow-FLUX (NFE=2). Each image is of $1024\times1024$ resolution. }
    \label{appx:vis3}
\end{figure}
\end{document}